\newcommand{\avec}{{\bf{a}}}
\newcommand{\evec}{{\bf{e}}}
\newcommand{\pvec}{{\bf{p}}}
\newcommand{\xvec}{{\bf{x}}}
\newcommand{\zvec}{{\bf{z}}}
\newcommand{\gvec}{{\bf{g}}}
\newcommand{\hvec}{{\bf{h}}}
\newcommand{\etavec}{{\bf{\eta}}}
\newcommand{\onevec}{{\bf{1}}}
\newcommand{\zerovec}{{\bf{0}}}
\newcommand{\muvec}{{\bf{\mu}}}
\newcommand{\alphavec}{{\bf{\alpha}}}
\newcommand{\Zeromat}{{\bf{0}}}
\newcommand{\Lambdamat}{{\bf{\Lambda}}}
\newcommand{\Thetamat}{{\bf{\Theta}}}
\newcommand{\Amat}{{\bf{A}}}
\newcommand{\Bmat}{{\bf{B}}}
\newcommand{\Cmat}{{\bf{C}}}
\newcommand{\Dmat}{{\bf{D}}}
\newcommand{\Emat}{{\bf{E}}}
\newcommand{\Hmat}{{\bf{H}}}
\newcommand{\Jmat}{{\bf{J}}}
\newcommand{\Imat}{{\bf{I}}}
\newcommand{\Lmat}{{\bf{L}}}
\newcommand{\Umat}{{\bf{U}}}
\newcommand{\Pmat}{{\bf{P}}}
\newcommand{\Smat}{{\bf{S}}}
\newcommand{\Rmat}{{\bf{R}}}
\newcommand{\Vmat}{{\bf{V}}}
\newcommand{\define}{\stackrel{\triangle}{=}}
\newcommand{\Psimat}{\mbox{\boldmath $\Psi$}}
\def\btheta{{\mbox{\boldmath $\theta$}}}
\def\btheta{{\mbox{\boldmath $\theta$}}}
\def\alphavec{{\mbox{\boldmath $\alpha$}}}
\def\etavec{{\mbox{\boldmath $\eta$}}}
\def\thetavec{{\mbox{\boldmath $\theta$}}}
\def\muvec{{\mbox{\boldmath $\mu$}}}
\def\thetavecsmall{{\mbox{\boldmath {\scriptsize $\theta$}}}}
\def\alphavecsmall{{\mbox{\boldmath {\scriptsize $\alpha$}}}}
\newcommand{\be}{\begin{equation}}
\newcommand{\ee}{\end{equation}}
\newcommand{\beqna}{\begin{eqnarray}}
\newcommand{\eeqna}{\end{eqnarray}}
\newtheorem{theorem}{Theorem}
\acrodef{mle}[MLE]{maximum likelihood estimator}
\acrodef{cmle}[CMLE]{constrained maximum likelihood estimator}
\acrodef{crb}[CRB]{Cram$\acute{\text{e}}$r-Rao Bound}
\acrodef{ccrb}[CCRB]{constrained Cram$\acute{\text{e}}$r-Rao Bound}
\acrodef{dlpf}[DLPF]{decoupled linear power flow}
\acrodef{ac}[AC]{alternating current}
\acrodef{dc}[DC]{direct current}
\acrodef{alm}[ALM]{augmented Lagrangian method}
\acrodef{admm}[ADMM]{alternating directional method of multipliers}
\acrodef{pgd}[PGD]{projected gradient descent}
\acrodef{algoadmm}[ADMM]{Alternating Directional Method of Multipliers}
\acrodef{tls}[TLS]{total least squares}
\acrodef{wrt}[w.r.t.]{with respect to}
\acrodef{pdf}[PDF]{probability density function}
\acrodef{fim}[FIM]{Fisher Information Matrix}
\acrodef{pmu}[PMUs]{phasor measurement units}
\acrodef{mse}[MSE]{mean-squared-error}
\acrodef{ols}[OLS]{Ordinary Least Squares}
\acrodef{gsp}[GSP]{graph signal processing}
\acrodef{gso}[GSO]{graph shift operator}
\acrodef{snr}[SNR]{signal-to-noise ratio}
\acrodef{gmrf}[GMRF]{Gaussian Markov random field}
\acrodef{lu-ccrb}[LU-CCRB]{Lehmann-unbiased CCRB}
\acrodef{glasso}[GLASSO]{graphical LASSO}
\acrodef{lgmrf}[LGMRF]{Laplacian-constrained GMRF}
\acrodef{re}[RE]{Relative Error}
\acrodef{alpe}[ALPE]{Adaptive Laplacian-constrained Precision matrix Estimation}
\acrodef{newgle}[NewGLE]{New Graph Laplacian Estimation}
\acrodef{psd}[PSD]{positive semi-definite}
\acrodef{iid}[i.i.d.]{independent and identically distributed}
\acrodef{dn}[DN]{distribution network}
\acrodef{tn}[TN]{transmission network}
\DeclareMathAlphabet{\pazocal}{OMS}{zplm}{m}{n} 
\acrodef{gsp}[GSP]{graph signal processing}
\acrodef{gft}[GFT]{Graph Fourier Transform}
\definecolor{mydarkgreen}{rgb}{0.0, 0.5, 0.0}
\definecolor{mydarkred}{rgb}{0.5, 0.0, 0.0}
\definecolor{mydarkblue}{rgb}{0.0, 0.0, 0.5}
\definecolor{myblue}{RGB}{70, 130, 180}
\definecolor{myred}{RGB}{178, 34, 34}
\newtcolorbox{boxA}[2][]{%
  attach boxed title to top center
              = {yshift=-8pt},
  colback      = brown!30,
  colframe     = brown!80!black,
  fonttitle    = \bfseries\color{brown!70!black},
  colbacktitle = orange!30!brown,
  title        = #2,#1,
  enhanced,
}
\begin{document}

\title{Cram$\acute{\text{e}}$r-Rao Bounds for Laplacian Matrix Estimation
}
\author{Morad Halihal,
Tirza Routtenberg, \IEEEmembership{Senior Member, IEEE},
and  H. Vincent Poor \IEEEmembership{Fellow Member, IEEE}

\thanks{
{\footnotesize{M. Halihal and T. Routtenberg are with the School of Electrical and Computer Engineering, Ben-Gurion University of the Negev, Beer-Sheva 84105, Israel, e-mail: moradha@post.bgu.ac.il, tirzar@bgu.ac.il.
 H. V. Poor
is with the Department of Electrical and Computer Engineering, Princeton University, Princeton, NJ, e-mail: poor@princeton.edu.
}}}
\thanks{ Parts of this work were presented at the IEEE International Conference on Acoustics, Speech, and Signal Processing (ICASSP) 2024 as the paper~\cite{ICASSP2024MORAD}. This research was supported by the ISRAEL SCIENCE FOUNDATION (Grant No. 1148/22) and 
by the Israel Ministry of National
Infrastructure, Energy, and Water Resources.}
}
\maketitle

\begin{abstract}
In this paper, we analyze the performance of the estimation of Laplacian matrices under general observation models. Laplacian matrix estimation involves structural constraints, including symmetry and null-space properties, along with matrix sparsity.
 By exploiting a linear reparametrization that enforces the structural constraints,
we derive closed-form matrix expressions for the Cram$\acute{\text{e}}$r-Rao Bound (CRB)  specifically tailored to Laplacian matrix estimation. We further extend the derivation to the sparsity-constrained case,
introducing two oracle CRBs that incorporate prior information of the support set, i.e. the locations of the nonzero entries in the Laplacian matrix. We examine the properties and order relations between the bounds, and provide the associated Slepian-Bangs formula for the Gaussian case. 
We demonstrate the use of the new CRBs in three representative applications: (i) topology identification in power systems, (ii) graph filter identification in diffused models, and (iii) precision matrix estimation in Gaussian Markov random fields under Laplacian constraints. 
The CRBs are evaluated and compared with the mean-squared-errors (MSEs) of the constrained maximum likelihood estimator (CMLE), which integrates both equality and inequality constraints along with sparsity constraints,
and of the oracle CMLE, which knows the locations of the nonzero entries of the Laplacian matrix. 
We perform this analysis for the applications of power system topology identification and graphical LASSO, and demonstrate that the MSEs of the estimators converge to the CRB and oracle CRB, given a sufficient number of measurements.

\end{abstract}
\begin{IEEEkeywords}
Graph signal processing (GSP), Laplacian matrix estimation, Cram$\acute{\text{e}}$r-Rao bound (CRB), sparsity constraints
\end{IEEEkeywords}


\section{Introduction}
\label{sec:intro}

Graph-structured data and signals arise in numerous applications, including power systems, communications,  finance, social networks, and biological networks, for analysis and inference of networks \cite{sandryhaila2014discrete, shuman2013emerging}.
In this context, the Laplacian matrix, which captures node connectivity and edge weights, serves as a fundamental tool for clustering \cite{cardoso2020algorithmslearninggraphsfinancial},  modeling graph diffusion processes \cite{Shafipour2021Diffusion,pasdeloup2017characterization}, topology inference \cite{halihal2024estimation,segarra2017network,dong2016learning,mei2015signal,pasdeloup2017characterization,chepuri2017learning,kalofolias2016learn}, anomaly detection \cite{drayer2020detection}, graph-based filtering \cite{dabush2021state,kroizer2022bayesian,ramakrishna2020user,he2022detecting,isufi2024graph},
and analyzing smoothness on graphs \cite{Lital2023Smooth}. In the emerging field of \ac{gsp}, the Laplacian matrix enables fundamental operations such as filtering and interpolation on graphs.
Thus,  Laplacian matrix estimation is crucial across various domains.

As a motivating example, a key real-world application of Laplacian matrix estimation is in power systems, where the admittance matrix plays a crucial role in grid monitoring, optimization, and cyber security \cite{Abur_Gomez_book, Giannakis_Wollenberg_2013}. Hence, accurate estimation of this matrix is essential for topology identification, line parameter estimation, outage identification \cite{Poor_Tajer_2012}, and cyber attack detection \cite{drayer2020detection, Gal2023detection}. The increasing integration of distributed renewable energy resources further amplifies the need for robust admittance matrix estimation and its associated performance analysis \cite{ICASSP2024MORAD,zhang2023topology}.
Accurate performance analysis is therefore essential for the design, monitoring, and reliable operation of modern power systems.


Numerous approaches have been proposed in the literature for the estimation of the Laplacian matrix. In \ac{gsp}, learning the graph Laplacian matrix from smooth graph signals has been studied in \cite{Xiaowen2015Laplace, Vassilis2016graph}. In power systems, admittance matrix estimation has been explored in \cite{GridGSP,Li_Poor_Scaglione_2013,Park_Deka_Chertkov2018,Morad_ICASSP2022,halihal2024estimation,Grotas_Routtenberg_2019}.
In graphical models \cite{10.1093/biostatistics/kxm045,Egilmez_Pavez_Ortega_2017,dong2016learning,ying2020nonconvex,zhao2019optimization,pavez2016generalized,dAspremont2008first}, the seminal work in \cite{10.1093/biostatistics/kxm045} introduced the \ac{glasso} algorithm, a regularization-based framework for  sparse precision (inverse covariance) matrix estimation. 
Various computationally efficient extensions of \ac{glasso} have been proposed (see, e.g. \cite{Banerjee2008SparseMLE, Mazumder2011TheGL}). 
More recently, estimation methods that enforce Laplacian structure on the precision matrix have gained attention \cite{Egilmez_Pavez_Ortega_2017, ying2021minimax, Yakov2023Laplace}. 
However, despite these advances, existing estimation approaches lack theoretical performance guarantees, which are essential for understanding their limitations, guiding system design, and assessing estimator performance.

Performance bounds are widely used for system design and performance analysis in estimation theory.  
The well-known \ac{crb} is a lower bound on the variance of any unbiased estimator and serves as a benchmark for assessing estimation accuracy.
The estimation of the Laplacian matrix involves additional parametric constraints in the form of equality, inequality, and sparsity constraints \cite{Morad_ICASSP2022,halihal2024estimation}, necessitating performance bounds that incorporate these structural properties.
The \ac{ccrb} \cite{gorman1990lower,Stoica_Ng, Nitzan_constraints} is a lower bound on the \ac{mse} of any $\chi$-unbiased estimator \cite{benhaim2010cramer, Nitzan_constraints}, which integrates equality parametric constraints to obtain a valid bound. Moreover, the \ac{ccrb} is asymptotically attained by the commonly used \ac{cmle} \cite{osborne_2000,benhaim2010cramer}.
Since linear constraints can be equivalently handled via reparametrization \cite{Moore_scoring, menni2014versatility}, an alternative approach is to derive the conventional \ac{crb} under a reduced-dimensional reparametrization, which is the approach adopted in this paper. 
In addition, sparsity constraints can be incorporated using oracle \ac{crb} formulations \cite{ben2010cramer,babadi2008asymptotic}, 
which are based on perfect knowledge of the support set of the parameter vector.
 However, a tailored \ac{crb} that explicitly integrates both sparsity and reparametrization for Laplacian matrix estimation has not been developed, motivating the need for a rigorous theoretical framework that accounts for these structural properties.


In this paper, we develop performance bounds for the estimation of  Laplacian matrices under structural and sparsity constraints.  First, we introduce a reparametrization that maps the constrained estimation problem into a lower-dimensional parameter space, while preserving the structural equality constraints of the Laplacian matrix. Using this reparametrization, 
we derive four versions of the \ac{crb} for complete and sparse graphs, each with different regularity conditions. 
In particular, for sparse graphs we establish two oracle \acp{crb} that are based on the information of the true support set, i.e. the locations of nonzero entries of the Laplacian matrix. 
We analyze the order relations between the four bounds developed in this paper,  and demonstrate the trade-off between bound tightness and the dimensionality of the associated  \ac{fim} that is required to be a non-singular matrix.
Notably, the ordering relation between the two oracle \acp{crb}, established for the first time in this paper, extends beyond Laplacian estimation and applies to general sparse recovery problems.
In the Gaussian setting, we provide a structured matrix form of the \ac{fim} that is free of
expectation operators,  using the Slepian-Bangs formula.

From the practical perspective, we apply the proposed \acp{crb} to three representative applications:  (i) power system topology estimation via admittance matrix recovery, (ii) graph filter identification in diffused \ac{gsp} models, and (iii) precision matrix estimation in \acp{gmrf} under Laplacian constraints.
In numerical simulations, we evaluate the performance of the proposed \acp{crb} and compare them with the \ac{mse} of the \ac{cmle} and the oracle \ac{cmle}. 
 Our results demonstrate that the \ac{cmle} asymptotically achieves the oracle \acp{crb}, and thus, the bounds are informative as benchmarks in these problems. 
 Additionally, the \acp{crb} accurately characterize estimators' performance across different noise and sample-size regimes, and are useful for the identification of the regions where the inequality constraints naturally hold, thus providing valuable insights into the impact of Laplacian constraints on estimation accuracy.

{\em{Notations and Organization:}}
In this paper, vectors and matrices are denoted by boldface lowercase and uppercase letters, respectively.  The $K \times K$ identity matrix is denoted by $\Imat_{K}$, $\onevec$ is a vector of ones, and $\Emat_{i,j}$ is a matrix with a single nonzero entry at position $(i,j)$, which is 1.  The vector $\evec_m$ is the $m$th column of the identity matrix of the relevant order. 
The notations $|\cdot|$, $\otimes$, and $\text{Tr}(\cdot)$ denote the determinant operator, the Kronecker product, and the trace operator, respectively. 
For any positive semi-definite matrices $\Amat$ and $\Bmat$,
$\Amat\succeq\Bmat$ implies that $\Amat-\Bmat$ is a positive semi-definite matrix.  In addition, $\Amat^{-1}$,
$\Amat^\dagger$, and $\|\Amat\|_2$ are the inverse,  Moore-Penrose pseudo inverse,
and Frobenius norm, respectively, of a matrix $\Amat$.
The $\ell_0$ norm, $\|\avec\|_0$, counts the nonzero entries of $\avec$, and its support is denoted by $\text{supp}(\avec)$.  
For a square matrix $\Amat$,  ${\text{Vec}}(\Amat)$  stacks its columns into a vector. 
The operators
 ${\text{Vec}}_{\ell}(\Amat)$ and $\text{Vec}_{u}(\Amat)$, respectively,  stack the 
 columns of the lower and upper triangular elements of $\Amat$, excluding the diagonal, 
 into a column vector, while $\text{Vec}_{d}(\Amat)$ extracts the diagonal elements as a vector. These operations are illustrated in Fig. \ref{Fig_illustration}.
 In addition,
 for any vector $\avec$, ${\text{diag}}(\avec)$ denotes a diagonal matrix where the $(n,n)$th entry is $a_n$, and $\| \avec \|_\Amat^2=\avec^T \Amat \avec$ defines the weighted quadratic norm for a positive semi-definite matrix $\Amat$. 
 Finally, the Jacobian matrix of a vector function $\gvec$  \ac{wrt} the vector $\thetavec$
  is a matrix with the $(m,k)$th element $\big[\frac{{\partial}\gvec(\thetavecsmall)}{{\partial}\thetavecsmall}\big]_{m,k}=\frac{\partial \mathrm{g}_{m}(\thetavecsmall)}{\partial\theta_{k}}$. 

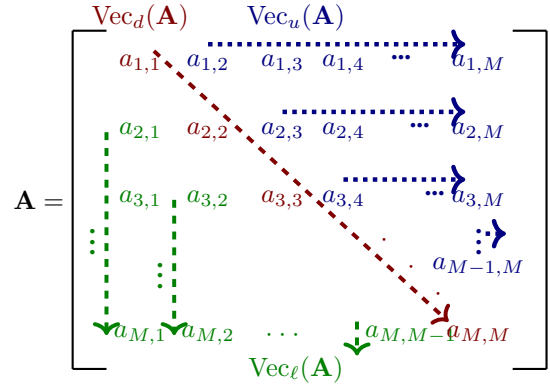
\begin{figure}[hbt]
    \centering
    \begin{tikzpicture}[scale=0.9]
        
        \node at (0,0) {\textcolor{mydarkred}{$a_{1,1}$}};
        \node at (1,0) {\textcolor{mydarkblue}{$a_{1,2}$}};
        \node at (2.1,0) {\textcolor{mydarkblue}{$a_{1,3}$}};
        \node at (3,0) {\textcolor{mydarkblue}{$a_{1,4}$}};
        \node at (5,0) {\textcolor{mydarkblue}{$a_{1,M}$}};

        \node at (0,-1) {\textcolor{mydarkgreen}{$a_{2,1}$}};
        \node at (1,-1) {\textcolor{mydarkred}{$a_{2,2}$}};
        \node at (2.1,-1) {\textcolor{mydarkblue}{$a_{2,3}$}};
        \node at (3,-1) {\textcolor{mydarkblue}{$a_{2,4}$}};
        \node at (5,-1) {\textcolor{mydarkblue}{$a_{2,M}$}};
        
        \node at (0,-2) {\textcolor{mydarkgreen}{$a_{3,1}$}};
        \node at (1,-2) {\textcolor{mydarkgreen}{$a_{3,2}$}};
        \node at (2.1,-2) {\textcolor{mydarkred}{$a_{3,3}$}};
        \node at (3,-2) {\textcolor{mydarkblue}{$a_{3,4}$}};
        \node at (5,-2) {\textcolor{mydarkblue}{$a_{3,M}$}};
        \node at (5,-2.5) {\textcolor{mydarkblue}{\textbf{\vdots}}};

        \node at (5,-3) {\textcolor{mydarkblue}{$a_{M-1,M}$}};
        \node at (0,-4) {\textcolor{mydarkgreen}{$a_{M,1}$}};
        \node at (1,-4) {\textcolor{mydarkgreen}{$a_{M,2}$}};
        \node at (2.1,-4) {\textcolor{mydarkgreen}{\textbf{$\cdot \cdot \cdot$}}}; 
        \node at (4,-4) {\textcolor{mydarkgreen}{$a_{M,M-1}$}};
        \node at (5,-4) {\textcolor{mydarkred}{$a_{M,M}$}};

        \draw[thick] (-1, 0.5) -- (-1, -4.5);
        \draw[thick] (6, 0.5) -- (6, -4.5);
        \draw[thick] (-1, 0.5) -- (-0.5, 0.5);
        \draw[thick] (5.5, 0.5) -- (6, 0.5);
        \draw[thick] (-1, -4.5) -- (-0.5, -4.5);
        \draw[thick] (5.5, -4.5) -- (6, -4.5);
        
        \node at (-1.5, -2) {$\Amat = $};

        \draw [->, line width=1.7pt, dotted, draw=mydarkblue] (1, 0.3) -- (4.8, 0.3) node[near end, below] {\textcolor{mydarkblue}{\textbf{...}}};
        \draw [->, line width=1.7pt, dotted, draw=mydarkblue] (2.1, -0.7) -- (4.8, -0.7) node[near end, below] {\textcolor{mydarkblue}{\textbf{...}}};
        \draw [->, line width=1.7pt, dotted, draw=mydarkblue] (3, -1.7) -- (4.8, -1.7) node[near end, below] {\textcolor{mydarkblue}{\textbf{...}}};
        \draw [->, line width=1.7pt, dotted, draw=mydarkblue] (5.1, -2.5) -- (5.4, -2.5);

        \draw [->, line width=1.5pt, dashed, draw=mydarkgreen] (-0.5, -1) -- (-0.5, -4) node[midway, left] {\textcolor{mydarkgreen}{\textbf{$\vdots$}}};
        \draw [->, line width=1.5pt, dashed, draw=mydarkgreen] (0.5, -2) -- (0.5, -4) node[midway, left] {\textcolor{mydarkgreen}{\textbf{$\vdots$}}};
        \draw [->, line width=1.5pt, dashed, draw=mydarkgreen] (3.2, -3.8) -- (3.2, -4.3);

        \draw [->, line width=1.5pt, dashed, draw=mydarkred] (0.2, 0.2) -- (4.55, -3.8);

        \node at (3.6, -2.6) {\textcolor{mydarkred}{\textbf{$\cdot$}}};
        \node at (4, -3) {\textcolor{mydarkred}{\textbf{$\cdot$}}};
        \node at (4.4, -3.4) {\textcolor{mydarkred}{\textbf{$\cdot$}}};

        \node at (0, 0.7) {\textcolor{mydarkred}{\textbf{$\mathrm{Vec}_{d}(\Amat)$}}};
        \node at (2.3, 0.7) {\textcolor{mydarkblue}{\textbf{$\mathrm{Vec}_{u}(\Amat)$}}};
        \node at (2.3, -4.5) {\textcolor{mydarkgreen}{\textbf{$\mathrm{Vec}_{\ell}(\Amat)$}}};
    \end{tikzpicture}
    \caption{Illustration of the considered operators applied on the matrix $\Amat \in \mathbb{R}^{M\times M}$: \textcolor{mydarkred}{\textbf{$\mathrm{Vec}_{d}(\cdot)$}} returns elements on the main diagonal; \textcolor{mydarkgreen}{\textbf{$\mathrm{Vec}_{\ell}(\cdot)$}} returns elements below the main diagonal; and \textcolor{mydarkblue}{\textbf{$\mathrm{Vec}_{u}(\cdot)$}} returns elements above the main diagonal.}
    \label{Fig_illustration}
\end{figure}

The remainder of this paper is organized as follows. In Section \ref{bounds_sec},
we present the Laplacian matrix
estimation model and develop several versions of the \acp{crb} for both fully connected and sparse graphs. In Section \ref{discussion_section}, we provide theoretical insights into the relationships and properties of the proposed bounds.  In Section \ref{application_section}, we apply
the developed bounds to three representative problems involving Laplacian estimation. 
Simulation results are presented in Section \ref{simulations_section} for the test case of power system topology identification and \ac{lgmrf}. Finally, the
paper is concluded in Section \ref{sec:conclusions}.


\section{CRB for Laplacian Matrix Estimation}
\label{bounds_sec}
In this section, we develop the \ac{crb} for the estimation of the Laplacian matrix, which plays a critical role in many applications (see, e.g. \cite{
Lital2023Smooth,cardoso2020algorithmslearninggraphsfinancial,Shafipour2021Diffusion,sandryhaila2014discrete,shuman2013emerging, halihal2024estimation, Giannakis_Wollenberg_2013, drayer2020detection}). The estimation of the Laplacian matrix is based on incorporating parametric and structural constraints that should be taken into account in the derivation of the \ac{crb}.
In Subsection \ref{Subsection: Laplacian matrix properties}, we present the problem formulation and the Laplacian matrix properties. In addition, we discuss the implications of the parametric constraints on the \ac{crb}. In Subsection \ref{Subsection: Laplacian Reparametrization}, we 
introduce a reparametrization approach to effectively incorporate the constraints. Finally, in Subsections
\ref{crb_subsection} and \ref{section_sparse_crb}, we develop the Laplacian-constrained \ac{crb} for complete graphs and sparse graphs, respectively, using the parametrized model.

\subsection{Problem Formulation and Laplacian Matrix Properties}
\label{Subsection: Laplacian matrix properties}

The Laplacian matrix plays a fundamental role in the field of graph theory and \ac{gsp}, where graphs are used to represent systems of interconnected nodes in many real-world applications.
This matrix captures the structure of the graph by reflecting both the connections between nodes and the edge weights. 
We consider the problem of estimating the Laplacian matrix of an undirected weighted graph  ${\mathcal{G}}$. The graph ${\mathcal{G}}$ consists of a set of nodes (vertices) ${\mathcal{V}}$, a set of edges ${\mathcal{E}}$, 
and a set of positive edge weights $\{W_{e}\}_{e\in{\mathcal{E}}}$. 
We assume that the graph has $M=|{\mathcal{V}}|$ nodes. 
The Laplacian matrix, $\Lmat \in \mathbb{R}^{M \times M}$, is defined such that its $(m,k)$th element is given by:
\begin{equation}
\label{L_def}
L_{m,k} = \begin{cases}
	\displaystyle \sum\nolimits_{m=1}^M  W_{m,k},& m = k \\
  \displaystyle  -W_{m,k},& {\text{otherwise}} 
\end{cases},
\end{equation}
$m,k=1,\ldots,M$. Consequently, $\Lmat $ is a real positive semi-definite matrix. The nonzero elements of $\Lmat$ are either the diagonal elements (i.e. $m=k$) or the off-diagonal elements corresponding to the edges of the graph (i.e. ${\mathcal{E}}_{k,m} \in {\mathcal{E}}$).

Based on this definition, the Laplacian matrix has the following properties:
	\renewcommand{\theenumi}{P.\arabic{enumi}}
\begin{enumerate}
    
    \item\label{P1} Symmetry: $\Lmat=\Lmat^T$
    \item\label{P2} Null-space property: $\Lmat\onevec = \zerovec$
    \item\label{P3} Non-positive off-diagonal entries: $[\Lmat]_{m,k}\leq 0$, \\  $\forall   m,k=1,\dots,M$, $m\neq k$
    \item\label{P4} Positive semi-definiteness: $\Lmat\succeq\zerovec$
    \item\label{P5} Sparsity: $\Lmat$ is a sparse matrix.
\end{enumerate}
Our goal in this paper is to derive the \ac{crb} tailored for the estimation of the Laplacian matrix $\Lmat$, taking into account its structural properties, and to demonstrate its usefulness in commonly-used estimation problems.

\subsubsection{Discussion on the parametric constraints}
In the following, we discuss the relevant parametric constraints for the Laplacian estimation problem based on Properties \ref{P1}-\ref{P5}.
First, we note that Property \ref{P4} is redundant in the presence of Properties \ref{P1}-\ref{P3} (see, e.g. \cite{Morad_ICASSP2022,halihal2024estimation} and p. 392 in \cite{Horn2012}). This is due to the fact that symmetric, diagonally-dominant matrices with nonnegative diagonal entries, such as $\Lmat$ under Properties \ref{P1}-\ref{P3}, are inherently positive semi-definite. Therefore, Property \ref{P4} can be excluded from the CRB development.
Second, the \ac{crb} is a local bound, and as such, inequality constraints do not provide additional information that impacts the bound (see, e.g. \cite{gorman1990lower,Stoica_Ng,Nitzan_constraints}). Consequently, the inequality constraint on non-positive off-diagonal elements of the Laplacian matrix in Property \ref{P3} is excluded from our formulation of the new bound.
Finally, the sparsity constraint in Property \ref{P5} should be taken into account in a similar manner to vector estimation under sparsity constraints \cite{benhaim2010cramer}. As shown in \cite{benhaim2010cramer}, for general sparse recovery the oracle \ac{crb}, which uses the true support set, should be employed. In the considered setting, the oracle \ac{crb} is the \ac{crb} limited to the support of the sparse
Laplacian matrix.
To conclude, the key properties that should be integrated into the \ac{crb} derivations are Properties \ref{P1} and \ref{P2}, while the sparsity property (Property \ref{P5}) should be enforced by aligning the support set with its true value. 

\subsubsection{Which Cram$\acute{\text{e}}$r-Rao-type bound should be used?}
The conventional unconstrained \ac{crb} \cite{Cramer1946,Rao1945,Kayestimation} does not provide a valid bound on the \ac{mse} of Laplacian matrix estimators since it does not take into account the parametric constraints. Thus, constrained versions of the \ac{crb} must be considered.
The recently proposed \ac{lu-ccrb} \cite{Nitzan_constraints} offers a lower bound on the \ac{mse} under linear and nonlinear parametric constraints,  and has been demonstrated to be tighter and more informative than the state-of-the-art \ac{ccrb}. However, the constraints relevant to our analysis,  as discussed in the previous paragraph, are given by Properties \ref{P1} and \ref{P2}, which only imply {\em{linear}} equality constraints on the Laplacian matrix. In this scenario, the \ac{lu-ccrb} \cite{Nitzan_constraints}  coincides with the \ac{ccrb} from \cite{Hero_constraint}. Furthermore, for estimation problems involving {\em{linear}} parametric constraints, reparametrizing the problem and computing the conventional \ac{crb} for the reparametrized vector is equivalent to applying the \ac{ccrb} on the original estimation problem \cite{Moore_scoring,menni2014versatility}. Therefore, the choice between these approaches is a matter of preference. In this paper, we adopt the reparametrization approach, as detailed in the following subsection.

\subsection{Laplacian Reparametrization}
\label{Subsection: Laplacian Reparametrization}
In this subsection, we introduce the reparametrized model of the Laplacian estimation by incorporating the symmetric and null space properties of $\Lmat$ (i.e. Properties \ref{P1} and \ref{P2}). To this end, we first use an identity that relates the different components of any square matrix, i,e. the lower-triangular elements,  the  main diagonal elements,  and the upper-triangular elements, vectorized into $\mathrm{Vec}_{\ell}(\cdot)$, $\mathrm{Vec}_{d}(\cdot)$, and $\mathrm{Vec}_{u}(\cdot)$, respectively, and the vectorized matrix $\mathrm{Vec}(\Lmat)$, as follows \cite{hjørungnes2011complex}:
\begin{equation}
        \label{Connection Between Vectorization Operators}
        \mathrm{Vec}(\Lmat) = \Psimat_{\ell}\mathrm{Vec}_{\ell}(\Lmat) + \Psimat_{d}\mathrm{Vec}_{d}(\Lmat) + \Psimat_{u}\mathrm{Vec}_{u}(\Lmat),
\end{equation}
 \begin{figure*}
        \begin{equation}
                    \begin{aligned}
                        \label{Permutation matrix Ld} 
                     \Psimat_{d} \define \left[\text{Vec}(\evec_{1}\evec_{1}^{T}),\text{Vec}(\evec_{2}\evec_{2}^{T}),\dots,\text{Vec}(\evec_{M}\evec_{M}^{T})\right]\in \mathbb{R}^{M^{2}\times M} .
                    \end{aligned}
                \end{equation}
                \begin{equation}
                    \begin{aligned}
                        \label{Permutation matrix Ll} 
                     \Psimat_{\ell} \define \left[\text{Vec}(\evec_{2}\evec_{1}^{T}),\text{Vec}(\evec_{3}\evec_{1}^{T}),\dots,\text{Vec}(\evec_{M}\evec_{1}^{T}),\text{Vec}(\evec_{3}\evec_{2}^{T}),\dots,\text{Vec}(\evec_{M}\evec_{M-1}^{T})\right]\in \mathbb{R}^{M^{2}\times \frac{M(M-1)}{2}} .
                    \end{aligned}
                \end{equation}
                \begin{equation}
                    \begin{aligned}
                        \label{Permutation matrix Lu} 
                     \Psimat_{u} \define \left[\text{Vec}(\evec_{1}\evec_{2}^{T}),\text{Vec}(\evec_{1}\evec_{3}^{T}),\dots,\text{Vec}(\evec_{1}\evec_{M}^{T}),\text{Vec}(\evec_{2}\evec_{3}^{T}),\dots,\text{Vec}(\evec_{M-1}\evec_{M}^{T})\right]\in \mathbb{R}^{M^{2}\times \frac{M(M-1)}{2}} .
                    \end{aligned}
                \end{equation}
            \end{figure*}
\noindent \hspace{-0.2cm}where $\Psimat_{\ell}$, $\Psimat_{u}\in \mathbb{R}^{M^{2}\times \frac{M(M-1)}{2}}$, and $\Psimat_{d}\in \mathbb{R}^{M^{2}\times M}$ are defined in \eqref{Permutation matrix Ld}-\eqref{Permutation matrix Lu} at the beginning of Page 4. 

 In the following, we show that under the constraints specified by Properties \ref{P1} and \ref{P2}, the entries of the Laplacian matrix can be fully characterized using only the elements from the lower triangular part of $\Lmat$, i.e. through
 \be
\label{alpha_def}\alphavec\define \mathrm{Vec}_{\ell}(\Lmat).
\ee
 First, due to the symmetry constraint (Property \ref{P1}), we have 
 \be 
 \label{parametrized_P1}
 \mathrm{Vec}_{u}(\Lmat)=
 \mathrm{Vec}_{\ell}(\Lmat)=\alphavec.
 \ee 
Second, Property \ref{P2} implies a null space property that allows us to define a linear transformation matrix $\Pmat\in \mathbb{R}^{M\times \frac{M(M-1)}{2}}$ such that 
 \be\label{parametraized_P2}
 \mathrm{Vec}_{d}(\Lmat) = \Pmat \times\mathrm{Vec}_{\ell}(\Lmat)=\Pmat \times \alphavec.
 \ee

The general form of $\Pmat$ is defined such that 
for $i = 1,\dots, M$ and $k = 1, \dots,\frac{M(M-1)}{2}$, its $(i,k)$th elements are
\begin{equation}
\label{matrix P} [\Pmat]_{i,k} = - (\delta_{i, i_k} + \delta_{i, j_k}), \end{equation} 
where $\delta_{a,b}$ is the Kronecker delta function, which equals $1$ if $a = b$ and $0$  otherwise, and $(i_k, j_k)$ are the index pairs corresponding to the $k$th element of $\alphavec = \mathrm{Vec}_{\ell}(\Lmat)$, (that is, 
$\alpha_k=L_{i_k,j_k}$), with $i_k > j_k$. An example illustrating the construction of $\Pmat$ is provided below.

\begin{boxA}[colback=orange!7!white, colframe=brown!80!black, title=\textcolor{black!70!black}{\textbf{Example: Construction of \( \Pmat \)}}]
 mSuppose $M = 4$, so $\frac{M(M-1)}{2} = 6$. According to \eqref{alpha_def}, for this graph the vector $\alphavec$ is given by:
\begin{align*}
    \alphavec = [L_{21}, L_{31}, L_{41}, L_{32}, L_{42}, L_{43}]^T.
\end{align*}
The corresponding index pairs $(i_k, j_k)$ for the elements of $\alphavec$ are
\begin{align*}
    \{(i_k, j_k)\}_{k=1}^{6} \!\!&=&\!\!\!\! \{(2,1), (3,1), (4,1), (3,2), (4,2), (4,3)\}.
\end{align*}
Using these pairs, the matrix $\Pmat$ is constructed as:
\begin{align*}
    \Pmat = 
\begin{bmatrix}
-1 & -1 & -1 &  0 &  0 &  0 \\
-1 &  0 &  0 & -1 & -1 &  0 \\
 0 & -1 &  0 & -1 &  0 & -1 \\
 0 &  0 & -1 &  0 & -1 & -1 \\
\end{bmatrix}.
\end{align*}
Each row $i$ of $\Pmat$ contains $-1$ at positions $k$ where $i = i_k$ or $i = j_k$, ensuring proper mapping from $\alphavec$ to the diagonal elements of $\Lmat$.
\end{boxA}



 By substituting 
\eqref{parametrized_P1} and \eqref{parametraized_P2}
 into the identity in \eqref{Connection Between Vectorization Operators}, we obtain
\begin{equation}
        \label{Laplacian Description using only under elements}
        {\mathrm{Vec}}(\Lmat) = \Psimat\alphavec,
\end{equation}
where   
\be
\label{Psimat_def}
\Psimat \define \Psimat_{\ell} + \Psimat_{d}\Pmat + \Psimat_{u}\in \mathbb{R}^{M^{2}\times \frac{M(M-1)}{2}}.
\ee
 This formulation represents a linear transformation that directly relates $\mathrm{Vec}(\Lmat)$ and $\alphavec=\mathrm{Vec}_{\ell}(\Lmat)$, thereby providing a compact representation of the Laplacian matrix under the given constraints.
Based on the transformation in \eqref{Laplacian Description using only under elements}, we define the new vector for the parameter estimation problem as $\alphavec$
from \eqref{alpha_def}.
Thus, 
in the reparametrized problem we have $\frac{M(M-1)}{2}$ parameters to estimate compared with $M^2$ in the original model that aims to estimate the full parameter vector $\mathrm{Vec}_{\ell}(\Lmat)$. 
Thus, we can write the log-likelihood function of the Laplacian-dependent model as a sole function of $\alphavec$ and achieve a dimensionality reduction.

\subsection{CRB for Complete Graphs}
\label{crb_subsection}

In this and the following subsections, we develop the \acp{crb} for Laplacian matrix estimation. 
The \ac{crb} is a fundamental lower bound on the \ac{mse} of any unbiased estimator of a parameter vector in the non-Bayesian framework \cite{Cramer1946,Rao1945,Kayestimation}.
The proposed \acp{crb} in Subsections \ref{crb_subsection} and \ref{section_sparse_crb} are developed for the problem of estimating the
Laplacian matrix from an observation vector, 
$\xvec$, with a known \ac{pdf}, $f(\xvec;\Lmat)$, parametrized by an unknown, deterministic Laplacian matrix, $\Lmat$.
In this subsection, we derive the \ac{crb} for $\Lmat$ under the symmetry and null space properties, i.e. for the parametrized model with $\alphavec$,  for complete graphs.

Let us define the \ac{fim} based on the entire Laplacian matrix:
\beqna
\label{J_def}
\Jmat_L\big(\mathrm{Vec}(\Lmat)\big)
\define {\mathbb{E}}\left[\left(\frac{\partial \log f(\xvec;\Lmat)}{\partial \mathrm{Vec}(\Lmat)}\right)^T\frac{\partial \log f(\xvec;\Lmat)}{\partial 
 \mathrm{Vec}(\Lmat)}\right].
\eeqna
The matrix $\Jmat_L\big(\Psimat \alphavec\big)$ is obtained by substituting $\mathrm{Vec}(\Lmat)=\Psimat \alphavec$, from \eqref{Laplacian Description using only under elements} in \eqref{J_def}.
The following theorem states the \ac{crb} for the case of a complete graph, i.e. where all the elements of the parameter vector $\alphavec$ are nonzero.  This theorem incorporates linear equality constraints into the estimation problem, yielding a tighter bound than the unconstrained CRB. 
\begin{theorem}[Laplacian-Constrained CRB for Complete Graphs]
\label{Theorem1 - Complete Graph}
Consider the estimation of the vector $\alphavec\in \mathbb{R}^{\frac{M(M-1)}{2}}$, defined in \eqref{alpha_def}, from the observation vector $\xvec$,
assuming a complete graph structure.
  Then, under the regularity conditions of the \ac{crb} (see, e.g.\cite[Chapter~3, pp.~39--45]{Kayestimation}),
the \ac{mse} of any unbiased estimator of $\alphavec$, $\hat{\alphavec}$, satisfies
\begin{equation}
        \label{bound on mse}
        \mathbb{E}[(\hat{\alphavec}-\alphavec)(\hat{\alphavec}-\alphavec)^{T}]\succeq
        \tilde{\Bmat}_i,~i=1,2,
\ee
where
\begin{subequations}
\begin{align}
\label{B1_tilde_def}
\tilde{\Bmat}_1 &\triangleq \Jmat_\alpha^{-1}(\alphavec), \\
\label{B2_tilde_def}
\tilde{\Bmat}_2 &\triangleq \Psimat^{T}\Jmat_L^{-1}\big(\Psimat \alphavec\big) \Psimat,
\end{align}
\end{subequations}
in which
\begin{equation}
        \label{Theorem 2: Fisher Information Matrix}
        \Jmat_\alpha(\alphavec) \define \Psimat^{T}\Jmat_L\big(\Psimat \alphavec\big)\Psimat\in{\mathbb{R}}^{\frac{M(M-1)}{2}\times \frac{M(M-1)}{2}},
\end{equation}
and under the assumption that the \acp{fim} are invertible.
Equality in \eqref{bound on mse} is achieved {\em{iff}} the estimator satisfies
\beqna \label{equality_cond}
 \hat{\alphavec}-\alphavec \!\!\!\!&=&\!\!\!\!\!
 \tilde{\Bmat}_i \times
 \left(
\frac{\partial\log f(\xvec;\alphavec)}{\partial \alphavec}\right)^T,
 \eeqna
for any $\alphavec$ in the parameter space. 
\end{theorem}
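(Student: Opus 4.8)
The plan is to recognize that the reparametrization $\mathrm{Vec}(\Lmat)=\Psimat\alphavec$ in \eqref{Laplacian Description using only under elements} turns the constrained Laplacian problem into an ordinary unconstrained estimation problem in the reduced parameter $\alphavec$, so that $\tilde{\Bmat}_1$ is nothing but the textbook CRB for that reparametrized model, whereas $\tilde{\Bmat}_2$ arises from first bounding the full vector $\mathrm{Vec}(\Lmat)$ and then compressing the bound through $\Psimat$. First I would prove the information identity $\Jmat_\alpha(\alphavec)=\Psimat^{T}\Jmat_L(\Psimat\alphavec)\Psimat$. Since $\mathrm{Vec}(\Lmat)=\Psimat\alphavec$ is linear with constant Jacobian $\partial\,\mathrm{Vec}(\Lmat)/\partial\alphavec=\Psimat$, the chain rule gives the score identity $\partial\log f(\xvec;\alphavec)/\partial\alphavec=\big(\partial\log f/\partial\,\mathrm{Vec}(\Lmat)\big)\,\Psimat$; substituting this row-vector identity into the outer-product definition of the FIM in \eqref{J_def} and pulling the constant factors $\Psimat^{T}$ and $\Psimat$ outside the expectation yields the sandwiched form. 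Invoking the conventional CRB for the model indexed by $\alphavec$ then delivers $\mathbb{E}[(\hat{\alphavec}-\alphavec)(\hat{\alphavec}-\alphavec)^{T}]\succeq\Jmat_\alpha^{-1}(\alphavec)=\tilde{\Bmat}_1$, valid whenever $\Jmat_\alpha$ is nonsingular.

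The equality condition \eqref{equality_cond} for $i=1$ is the classical efficiency condition, which I would obtain by tracking the case of equality in the covariance (Cauchy--Schwarz) inequality that underlies the CRB: equality holds precisely when the centered estimator $\hat{\alphavec}-\alphavec$ is an almost-sure linear image of the score vector, and multiplying through and taking expectations against the score forces the multiplier to equal $\Jmat_\alpha^{-1}=\tilde{\Bmat}_1$. This reproduces \eqref{equality_cond} verbatim, and the corresponding statement for $\tilde{\Bmat}_2$ would be obtained in the same way at the level of $\mathrm{Vec}(\Lmat)$.

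For $\tilde{\Bmat}_2$ I would work one level up, at the full parameter $\mathrm{Vec}(\Lmat)$. Under the stronger regularity assumption that the $M^{2}\times M^{2}$ matrix $\Jmat_L(\Psimat\alphavec)$ is itself nonsingular, the conventional CRB bounds the covariance of an estimator of $\mathrm{Vec}(\Lmat)$ by $\Jmat_L^{-1}$, and I would then propagate this bound through the linear correspondence between $\mathrm{Vec}(\Lmat)$ and $\alphavec$ to obtain the compressed expression $\Psimat^{T}\Jmat_L^{-1}\Psimat=\tilde{\Bmat}_2$. The step I expect to be the main obstacle is making this propagation rigorous and pinning down the order relation between the two bounds, since the two expressions differ precisely because $\Psimat$ need not have orthonormal columns. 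Reconciling them rests on a Loewner-type matrix inequality relating $\Jmat_L^{-1}$ and $(\Psimat^{T}\Jmat_L\Psimat)^{-1}$, rooted in the observation that $\Imat-\Jmat_L^{1/2}\Psimat(\Psimat^{T}\Jmat_L\Psimat)^{-1}\Psimat^{T}\Jmat_L^{1/2}$ is an orthogonal projector and hence positive semidefinite. Carefully separating the two regularity regimes---$\Jmat_\alpha$ invertible for $\tilde{\Bmat}_1$ versus the higher-dimensional $\Jmat_L$ invertible for $\tilde{\Bmat}_2$---and confirming the attainment condition in each case is the delicate part of the argument.
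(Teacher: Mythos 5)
Your treatment of $\tilde{\Bmat}_1$ is correct and coincides with the paper's route: the paper obtains the score identity, the sandwiched \ac{fim} $\Jmat_\alpha(\alphavec)=\Psimat^{T}\Jmat_L(\Psimat\alphavec)\Psimat$, and the bound $\tilde{\Bmat}_1=\Jmat_\alpha^{-1}(\alphavec)$ by substituting the linear map $\mathrm{Vec}(\Lmat)=\Psimat\alphavec$ into Proposition 1 of \cite{Hai_Messer_Tabrikian}, and your chain-rule derivation together with the classical attainment condition reproduces exactly what that proposition encapsulates.

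The gap is in your derivation of $\tilde{\Bmat}_2$. The paper obtains this bound by citing Proposition 2 of \cite{Hai_Messer_Tabrikian}; a bound of this form is established by a direct covariance-inequality argument with the test statistic $\Psimat^{T}\Jmat_L^{-1}(\Psimat\alphavec)\left(\partial\log f/\partial\mathrm{Vec}(\Lmat)\right)^{T}$, not by compressing the ambient-space \ac{crb} as you propose. Your propagation step fails on two counts. First, $\Psimat\hat{\alphavec}$ is unbiased for $\mathrm{Vec}(\Lmat)$ only along the constraint manifold $\{\Psimat\alphavec\}$, so the unconstrained $M^{2}$-dimensional \ac{crb} $\mathrm{Cov}(\Psimat\hat{\alphavec})\succeq\Jmat_L^{-1}$ is not available: the bias gradient with respect to the ambient parameter is undetermined in directions off the range of $\Psimat$. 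Second, even granting that premise, multiplying $\Psimat\,\mathrm{Cov}(\hat{\alphavec})\,\Psimat^{T}\succeq\Jmat_L^{-1}$ on the left by $\Psimat^{T}$ and on the right by $\Psimat$ yields
\begin{equation*}
(\Psimat^{T}\Psimat)\,\mathrm{Cov}(\hat{\alphavec})\,(\Psimat^{T}\Psimat)\succeq\Psimat^{T}\Jmat_L^{-1}\Psimat,
\end{equation*}
which collapses to the claimed $\mathrm{Cov}(\hat{\alphavec})\succeq\Psimat^{T}\Jmat_L^{-1}\Psimat$ only when $\Psimat^{T}\Psimat=\Imat$; here $\Psimat^{T}\Psimat=2\Imat+\Pmat^{T}\Pmat\neq\Imat$, so the factor cannot be dropped. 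The same obstruction affects your proposed reconciliation of the two bounds: the projector identity gives $\Jmat_L^{-1}\succeq\Psimat(\Psimat^{T}\Jmat_L\Psimat)^{-1}\Psimat^{T}$ and hence $\Psimat^{T}\Jmat_L^{-1}\Psimat\succeq(\Psimat^{T}\Psimat)\tilde{\Bmat}_1(\Psimat^{T}\Psimat)$, which is not $\tilde{\Bmat}_2\succeq\tilde{\Bmat}_1$ without orthonormal columns of $\Psimat$ (contrast with Theorem \ref{Theorem2 - Sparse Graph}, where the selection matrix does satisfy $\Umat_{\alphavecsmall}^{T}\Umat_{\alphavecsmall}=\Imat$). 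To close the argument you would need to carry out the covariance-inequality proof underlying Proposition 2 of \cite{Hai_Messer_Tabrikian} and verify that it applies to this $\Psimat$, rather than compress the ambient-space bound.
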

\begin{proof}
The \ac{crb} on the estimation error of a general linear transformation
of a deterministic parameter vector can be derived by applying the chain rule to the derivatives of the log-likelihood function (see, e.g. \cite{VanTrees2004,Hai_Messer_Tabrikian}). 
By substituting the specific linear transformation from \eqref{Laplacian Description using only under elements} into the general non-Bayesian \acp{crb}  under linear transformations of the parameter vector provided in Propositions 1 and 2 in \cite{Hai_Messer_Tabrikian}, we obtain the reduced \ac{fim}, $\Jmat_\alpha(\alphavec)$, in \eqref{Theorem 2: Fisher Information Matrix}, as well as the two bounds in \eqref{bound on mse}.
The equality in \eqref{equality_cond} is obtained directly by 
substituting the transformed bounds in the equality condition of the \ac{crb}.
\end{proof}

A discussion on the relation between the two bounds in Theorem \ref{Theorem1 - Complete Graph} appears in Subsection \ref{order_relation_subsection}.  In addition, it can be seen that the equality condition in \eqref{equality_cond} characterizes the conditions under which the CRB is achieved, guiding the design of efficient estimators, e.g. by developing associated Fisher-scoring iterative methods.

\subsection{CRB for Sparse Graphs}
\label{section_sparse_crb}
In many practical applications, such as social networks, biological systems, and sensor networks, the Laplacian matrix is inherently sparse, containing only a few nonzero elements that correspond to the connections between nodes \cite{Yakov2023Laplace,PalomarApproximation,Cardoso2023}. 
This sparsity reflects the limited/nonexistent interactions between entities. While Theorem \ref{Theorem1 - Complete Graph}
provides the \ac{crb} for the estimation of the Laplacian of a complete (fully connected) graph, it does not account for the sparsity typically observed in practical settings.
To address this limitation, in this subsection, we develop a \ac{crb} version that integrates the sparsity structure of the Laplacian matrix, leading to the formulation of the oracle CRB, as detailed in Theorem \ref{Theorem2 - Sparse Graph}.

The following theorem presents the oracle \ac{crb} for the estimation of a sparse Laplacian matrix. The oracle \ac{crb}
is based on the \ac{crb} from Theorem \ref{Theorem1 - Complete Graph}, limited to the true sparsity  support of $\Lmat$. 
To this end, we define the following selection matrix: 
\be
\label{U_def}\Umat_{\alphavecsmall} = [\evec_{i_{1}},\dots,\evec_{i_{s}}]\in{\mathbb{R}}^{\frac{M(M-1)}{2}\times s},
\ee 
where
 $\evec_{i}$ denotes the $i$th standard basis vector in $\mathbb{R}^{\frac{M(M-1)}{2}}$,
$\|\alphavec\|_{0}=s$, and 
${\text{supp}(\alphavec)}=\{i_{1},\dots,i_{s}\}$ is the support of $\alphavec$, which includes the indices of its nonzero elements. Thus, 
$\Umat_{\alphavecsmall}$ is constructed by selecting the columns of the identity matrix $\Imat_{\frac{M(M-1)}{2}}$ corresponding to the indices in $\text{supp}(\alphavec)$, and 
$\Umat_{\alphavecsmall}^T \alphavec$  is a vector that only includes the nonzero entries in $\alphavec$.
\begin{theorem}[Oracle CRB for Sparse Graphs]
\label{Theorem2 - Sparse Graph}
Consider the estimation model described in Theorem \ref{Theorem1 - Complete Graph}, where, in addition, it is known that $\|\alphavec\|_0=s$.  
 Then, under the conventional regularity conditions of the \ac{crb},
the \ac{mse} of  any unbiased estimator of $\alphavec$, $\hat{\alphavec}$, satisfies
\beqna
        \label{bound on mse_sparse}
        {\mathbb{E}}\left[(\Umat_{\alphavecsmall}^T \hat{\alphavec}-\Umat_{\alphavecsmall}^T\alphavec)(\Umat_{\alphavecsmall}^T\hat{\alphavec}-\Umat_{\alphavecsmall}^T\alphavec)^{T}\right]\succeq \Bmat_i,~i=1,2,
\eeqna
where
\begin{subequations}
\begin{align}
\label{B1_def}
\Bmat_1 &\triangleq \left(\Umat_{\alphavecsmall}^{T}\Jmat_\alpha(\alphavec)\Umat_{\alphavecsmall}\right)^{-1}, \\
\label{B2_def}
\Bmat_2 &\triangleq \Umat_{\alphavecsmall}^{T}\Jmat_\alpha^{-1}(\alphavec) \Umat_{\alphavecsmall},
\end{align}
\end{subequations}
and
$\Jmat_\alpha(\alphavec)$ is the \ac{fim} from Theorem \ref{Theorem1 - Complete Graph}.
In addition, 
the oracle \ac{crb}  on the trace of the \ac{mse} of  $\hat{\alphavec}$ is given by
\beqna
        \label{bound on mse_sparse_trace}
  {\mathbb{E}}[(\hat{\alphavec}-\alphavec)^T(\hat{\alphavec}-\alphavec)]\succeq 
       {\text{Tr}} (\Bmat_i),~~~i=1,2.
\eeqna
\end{theorem}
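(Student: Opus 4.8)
The plan is to obtain both matrix bounds in \eqref{bound on mse_sparse} by reducing to results already available for the complete-graph case, and then to derive the trace bound \eqref{bound on mse_sparse_trace} as a direct corollary. The two bounds $\Bmat_1$ and $\Bmat_2$ arise from two genuinely different reductions, so I would handle them separately rather than as algebraic variants of a single inequality.

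For $\Bmat_1$ I would exploit the oracle knowledge of $\text{supp}(\alphavec)$ to reparametrize the model by its nonzero entries alone. Writing $\betavec \define \Umat_{\alphavecsmall}^T \alphavec \in \mathbb{R}^{s}$, the oracle identity $\alphavec = \Umat_{\alphavecsmall}\betavec$ holds because $\Umat_{\alphavecsmall}$ re-embeds the support entries into their original positions with zeros elsewhere. Since $\frac{\partial \alphavec}{\partial \betavec} = \Umat_{\alphavecsmall}$, the chain rule gives the score $\frac{\partial \log f(\xvec;\alphavec)}{\partial \betavec} = \frac{\partial \log f(\xvec;\alphavec)}{\partial \alphavec}\Umat_{\alphavecsmall}$, and hence the information matrix of the reduced $s$-dimensional model is $\Jmat_\beta = \Umat_{\alphavecsmall}^T \Jmat_\alpha(\alphavec)\Umat_{\alphavecsmall}$. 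Applying the conventional CRB of \cite{Hai_Messer_Tabrikian} to this reparametrized problem yields $\mathbb{E}[(\hat{\betavec}-\betavec)(\hat{\betavec}-\betavec)^T] \succeq \Jmat_\beta^{-1} = \Bmat_1$, which is exactly \eqref{bound on mse_sparse} for $i=1$ after identifying $\hat{\betavec} = \Umat_{\alphavecsmall}^T \hat{\alphavec}$. Notably, this route requires only the $s \times s$ matrix $\Umat_{\alphavecsmall}^T \Jmat_\alpha(\alphavec)\Umat_{\alphavecsmall}$ to be invertible.

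For $\Bmat_2$ I would instead invoke the complete-graph bound $\tilde{\Bmat}_1 = \Jmat_\alpha^{-1}(\alphavec)$ of Theorem \ref{Theorem1 - Complete Graph}, namely $\mathbb{E}[(\hat{\alphavec}-\alphavec)(\hat{\alphavec}-\alphavec)^T] \succeq \Jmat_\alpha^{-1}(\alphavec)$, which is valid whenever the full $\frac{M(M-1)}{2} \times \frac{M(M-1)}{2}$ information matrix is invertible. Congruence by $\Umat_{\alphavecsmall}$ preserves the positive-semidefinite order, so premultiplying by $\Umat_{\alphavecsmall}^T$ and postmultiplying by $\Umat_{\alphavecsmall}$ gives $\Umat_{\alphavecsmall}^T\, \mathbb{E}[(\hat{\alphavec}-\alphavec)(\hat{\alphavec}-\alphavec)^T]\, \Umat_{\alphavecsmall} \succeq \Umat_{\alphavecsmall}^T \Jmat_\alpha^{-1}(\alphavec)\Umat_{\alphavecsmall} = \Bmat_2$, and the left-hand side is precisely the support-block MSE appearing in \eqref{bound on mse_sparse}. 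In contrast to the first route, this one demands invertibility of the full FIM, a strictly stronger regularity requirement.

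Finally, the trace bound \eqref{bound on mse_sparse_trace} would follow from a single chain of inequalities valid for either $i$. Dropping the nonnegative squared errors on the off-support coordinates gives $(\hat{\alphavec}-\alphavec)^T(\hat{\alphavec}-\alphavec) \ge \|\Umat_{\alphavecsmall}^T(\hat{\alphavec}-\alphavec)\|_2^2$, whose expectation equals the trace of the support-block MSE matrix; applying $\Tr(\cdot)$ to \eqref{bound on mse_sparse} then bounds this trace below by $\Tr(\Bmat_i)$. I expect the main obstacle to be conceptual rather than computational: the two bounds are attached to different estimation strategies (exploiting the known zeros versus estimating the full vector) and therefore to different invertibility hypotheses, so the proof must derive $\Bmat_1$ through a fresh reparametrized CRB and $\Bmat_2$ through a congruence of Theorem \ref{Theorem1 - Complete Graph}, rather than presenting them as two manipulations of one bound. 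The precise order relation between $\Bmat_1$ and $\Bmat_2$ is not needed here and is deferred to the discussion in Subsection \ref{order_relation_subsection}.
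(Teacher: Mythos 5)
Your proposal is correct and follows essentially the same route as the paper: the paper obtains both matrix bounds by substituting the linear transformation $\Umat_{\alphavecsmall}^T\alphavec$ into Propositions 1 and 2 of \cite{Hai_Messer_Tabrikian}, which are exactly the two reductions you derive explicitly (reparametrization by the support entries for $\Bmat_1$, congruence of the full-FIM bound for $\Bmat_2$), and the trace bound is obtained identically by discarding the nonnegative off-support error term before applying $\Tr(\cdot)$. The only difference is that you unpack the cited propositions rather than invoking them, which makes the distinct invertibility requirements of the two bounds more explicit.
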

\begin{proof}
    Consider the reparametrized model as considered in Theorem \ref{Theorem1 - Complete Graph}, where the parameter vector is $\alphavec$ and the associated \ac{fim} is $\Jmat_{\alpha}(\alphavec)$. 
    Assume that $\alphavec$ is a sparse vector, $\|\alphavec\|_{0} = s$, with the support set ${\text{supp}}(\alphavec)$ and the selection matrix $\Umat_{\alphavecsmall}$.
    Using $\Umat_{\alphavecsmall}$, the nonzero elements of $\alphavec$ and its estimator $\hat{\alphavec}$ can be expressed as $\Umat_{\alphavecsmall}^T \alphavec$ and $\Umat_{\alphavecsmall}^T \hat{\alphavec}$, respectively. 
    By substituting the new linear transformation,  $\Umat_{\alphavecsmall}^T \alphavec$,
    in the general \acp{crb} under linear transformations provided in Propositions 1 and 2 of \cite{Hai_Messer_Tabrikian}, we obtain the two \ac{crb}-type bounds on the \ac{mse} of the nonzero elements in \eqref{bound on mse_sparse}-\eqref{B2_def}. 

For the total \ac{mse} of $\hat{\alphavec}$, we note that the elements of $\alphavec$ outside its support are zero. Thus, 
\beqna
        \label{mse_sparse}
  {\mathbb{E}}[(\hat{\alphavec}\!\!\!\!\!&-&\!\!\!\!\!\alphavec)^T(\hat{\alphavec}-\alphavec)]\nonumber\\
  \!\!\!\!\!&=&\!\!\!\!\! {\mathbb{E}}\left[(\Umat_{\alphavecsmall}^T \hat{\alphavec}-\Umat_{\alphavecsmall}^T\alphavec)^T(\Umat_{\alphavecsmall}^T\hat{\alphavec}-\Umat_{\alphavecsmall}^T\alphavec)\right]
  \nonumber\\
  &+&\!\!\!\!\! {\mathbb{E}}\left[(\Umat_{/\alphavecsmall}^T \hat{\alphavec}-\Umat_{/\alphavecsmall}^T\alphavec)^T(\Umat_{/\alphavecsmall}^T\hat{\alphavec}-\Umat_{/\alphavecsmall}^T\alphavec)\right]
  \nonumber\\
    &=&\!\!\!\!\! {\mathbb{E}}\left[(\Umat_{\alphavecsmall}^T \hat{\alphavec}-\Umat_{\alphavecsmall}^T\alphavec)^T(\Umat_{\alphavecsmall}^T\hat{\alphavec}-\Umat_{\alphavecsmall}^T\alphavec)\right]
  \nonumber\\
  &+&\!\!\!\!\! {\mathbb{E}}\left[\hat{\alphavec}^T  \Umat_{/\alphavecsmall}\Umat_{/\alphavecsmall}^T\hat{\alphavec}\right]
  \nonumber\\
  &\geq&\!\!\!\!\!  {\mathbb{E}}\left[(\Umat_{\alphavecsmall}^T \hat{\alphavec}-\Umat_{\alphavecsmall}^T\alphavec)^T(\Umat_{\alphavecsmall}^T\hat{\alphavec}-\Umat_{\alphavecsmall}^T\alphavec)\right],
\eeqna
where $\Umat_{/\alphavecsmall}\in{\mathbb{R}}^{\frac{M(M-1)}{2}\times (\frac{M(M-1)}{2}-s)}$ is the complement of $\Umat_{\alphavecsmall}$, which includes all the columns of $\Imat_{\frac{M(M-1)}{2}}$ that are not in  the index set $\text{supp}(\alphavec)$. 
The second equality is obtained by substituting $\Umat_{/\alphavecsmall}^T\alphavec=\zerovec$, since the matrix  $\Umat_{/\alphavecsmall}$ is associated with the indices that are not in the support set of $\alphavec$. The last inequality is obtained by removing a nonnegative term, i.e. using the fact that 
${\mathbb{E}}\left[\hat{\alphavec}^T  \Umat_{/\alphavecsmall}\Umat_{/\alphavecsmall}^T\hat{\alphavec}\right]\geq 0$.
By substituting 
\eqref{bound on mse_sparse} in \eqref{mse_sparse}
and using the fact that $\operatorname{Tr}(\cdot)$ is a linear operator, we obtain the bounds in \eqref{bound on mse_sparse_trace} on the total \ac{mse} trace.
\end{proof}


\section{Discussion and Special Cases}
\label{discussion_section}
In this section, we discuss some properties of the proposed \acp{crb}. 
In Subsection~\ref{bounds_prop} we examine the conditions for the existence of the bounds, investigate their order relation, and discuss their applicability to general sparse recovery problems. In Subsection~\ref{Gaussian_subsec} we address the Gaussian case, and derive the associated bounds using the Slepian-Bangs formula. Finally, in Subsection~\ref{relation_with_cmle} we explore the relationship between the proposed bounds and the \ac{cmle}.

\subsection{Discussion}
\label{bounds_prop} 
\subsubsection{Existence of the Bounds}
\label{order_relation_subsection}
The bound $\tilde{\Bmat}_2$ from \eqref{B2_tilde_def} in Theorem \ref{Theorem1 - Complete Graph} 
requires the invertibility of the  $M^2\times M^2$ \ac{fim}, $\Jmat_L(\mathrm{Vec}(\Lmat))$. 
The bound $\tilde{\Bmat}_1$ from \eqref{B1_tilde_def} in Theorem \ref{Theorem1 - Complete Graph} and the bound $\Bmat_2$ from  \eqref{B2_def} in Theorem \ref{Theorem2 - Sparse Graph} require the invertibility of the $\frac{M(M-1)}{2} \times \frac{M(M-1)}{2}$ \ac{fim}, $\Jmat_\alpha$. 
The bound $\Bmat_1$ from \eqref{B1_def} in Theorem \ref{Theorem2 - Sparse Graph} requires the inversion of a matrix with a significantly smaller dimension, 
$\Umat_{\alphavecsmall}^{T}\Jmat_\alpha(\alphavec)\Umat_{\alphavecsmall}{\in{\mathbb{R}}^{s\times s}}$,  where $s = \|\alphavec\|_0$ denotes the sparsity level of $\alphavec$. 
Since $s \ll \frac{M(M-1)}{2}$ in typical sparse Laplacian estimation problems, the condition for the existence of $\Bmat_1$ is significantly less restrictive than the other bounds. As a result, $\Bmat_1$ can often be computed even in cases where the complete-graph bounds from Theorem \ref{Theorem1 - Complete Graph} and $\Bmat_2$ cannot.
This is thanks to representing a lower-dimension projection of the information onto the subspace defined by a smaller parameter vector.
This property is particularly relevant
in sparse recovery scenarios, where the sparsity assumption renders the estimation problem well-posed in underdetermined settings.  
It can be seen that the invertibility of these bounds
is determined by the rank of the matrix $\Jmat_L(\Psimat \alphavec)$ from \eqref{J_def}, the sparsity level, and the graph dimension $M$.

\subsubsection{Relation with the Oracle CRB from  \cite{ben2010cramer}}
The two bounds in \eqref{bound on mse_sparse_trace} can be interpreted as the oracle \ac{crb} since they use the true support set of $\alphavec$. 
In particular, the bound $\Bmat_2$ is the oracle \ac{crb} from \cite{ben2010cramer} for general sparse vector estimation within the \ac{ccrb} framework \cite{gorman1990lower}. 
In this framework, the selection matrix, $\Umat_\alphavecsmall$, functions as a projection matrix onto the orthogonal subspace of the constrained subspace.

The derivation of $\Bmat_1$ extends the applicability of sparse recovery bounds to more general settings. Unlike $\Bmat_2$, which requires the invertibility of $\Jmat_\alpha(\alphavec)$, $\Bmat_1$ only depends on the reduced \ac{fim} corresponding to the nonzero elements of $\alphavec$. Thus, $\Bmat_1$ remains well-defined even in scenarios where $\Bmat_2$ is not, such as ill-posed problems with insufficient measurements.  
Moreover, $\Bmat_1$ is applicable beyond the Laplacian matrix estimation problem and can be utilized in general sparse recovery problems. 
By introducing $\Bmat_1$, we generalize the oracle \ac{crb} to cases involving singular \ac{fim}, providing a robust framework for addressing underdetermined problems.

\subsubsection{Order Relation}
\label{order_relation_subsec} 
The bounds $\Bmat_{1}$ and $\Bmat_{2}$ correspond to two ways of incorporating the support set of the Laplacian matrix into the \ac{crb}. The bound $\Bmat_{1}$ results from inverting the reduced \ac{fim} corresponding to the nonzero elements of $\alphavec$, while $\Bmat_{2}$ is obtained by projecting the inverse of the full \ac{fim} onto the support of $\alphavec$.
As shown in Proposition 3 of \cite{Hai_Messer_Tabrikian}, if both bounds exist, then $\Bmat_2 \succeq \Bmat_1$.
 Thus, $\Bmat_2$ is a tighter bound and should be preferred whenever it is valid. Similarly, if both bounds exist, then $\tilde{\Bmat}_2 \succeq \tilde{\Bmat}_1$.

The oracle \acp{crb} in Theorem \ref{Theorem2 - Sparse Graph} provide more appropriate bounds in a sparse setting compared to the general \acp{crb} in Theorem \ref{Theorem1 - Complete Graph}.
Thus, in general, $\tilde{\Bmat}_j \succeq \Bmat_i$, $i,j=1,2$. Specifically, estimators that integrate sparsity assumptions may achieve an \ac{mse} below the bounds from Theorem \ref{Theorem1 - Complete Graph}, as these bounds do not account for sparsity. 
On the other hand, the bounds from Theorem \ref{Theorem2 - Sparse Graph} enforce a specific sparsity pattern. If this pattern is mismatched, the bounds from Theorem \ref{Theorem1 - Complete Graph} may outperform it, particularly in asymptotic regimes where estimators naturally conform to the sparsity pattern even without explicit enforcement.
It should also be noted that under mild regularity conditions, all bounds asymptotically converge to the \ac{mse} of estimators with the correct support set.

\subsubsection{Incidence-based Decomposition of Laplacian Matrices}
\label{Alternative Decomposition of the Laplacian matrix}
In addition to the main reparametrization framework presented in \eqref{Laplacian Description using only under elements}, one may leverage the incidence matrix to derive an equivalent linear transformation. Specifically, 
for graphs with $s$ weighted edges, the unweighted incidence matrix $\Bmat\in \mathbb{R}^{M\times s}$ that describes the connectivity of the graph can be defined as follows:
\beqna
\Bmat_{k,m} =
\begin{cases}
1, & {\text{if edge }} e_{k,m}  {\text{ enters vertex }} k \\
-1, & {\text{if edge }} e_{k,m}  {\text{ enters vertex }} m \\
0, & \text{otherwise}
\end{cases},
\eeqna
where the direction of the edges is arbitrary. 

 Using this representation, the Laplacian matrix can be decomposed as
 \beqna
 \label{def_L_with_B}
        \Lmat = \Bmat{\text{diag}}(\Umat_{\alphavecsmall}^T \alphavec)\Bmat^{T}
=\sum_{m=1}^s\Bmat\Emat_{m,m}(\Umat_{\alphavecsmall}^T \alphavec)\evec_m^T\Bmat^{T},
\eeqna
where $\Umat_{\alphavecsmall}^T \alphavec$ contains the weights associated with the graph edges, $\Umat_{\alphavecsmall}$ is  defined in \eqref{U_def}, and $\Emat_{m,m}$ and $\evec_m$ are defined at the end of Section \ref{sec:intro}.
This decomposition separates the weights from the connectivity pattern.
By vectorizing both sides of \eqref{def_L_with_B} and applying properties of Kronecker products, we obtain
\be
    {\mathrm{Vec}}(\Lmat)=\sum_{m=1}^s((\Bmat\evec_m)\otimes (\Bmat\Emat_{m,m})) \Umat_{\alphavecsmall}^T \alphavec.
    \ee
Comparing this result with \eqref{Laplacian Description using only under elements},
the transformation matrix $\Psimat$ can be expressed as 
\beqna
\Psimat=\sum\nolimits_{m=1}^s((\Bmat\evec_m)\otimes (\Bmat\Emat_{m,m})) \Umat_{\alphavecsmall}^T.
\eeqna
Hence, the incidence-based approach provides an alternative derivation of the linear relationship between $\mathrm{Vec}(\Lmat)$ and $\alphavec$.
Similarly, for complete graphs, 
this formulation can be developed based on an $M \times \frac{M(M-1)}{2}$ incidence matrix $\Bmat$.

These transformations can be used for computing the estimators (especially the \ac{cmle}) and the \acp{crb}.
In certain contexts, such as network flow or power systems, where the incidence matrix $\Bmat$ arises naturally, the incidence-based formulation may be more straightforward. On the other hand, the reparametrization approach in \eqref{Laplacian Description using only under elements} directly encodes the Laplacian’s structural constraints and readily extends to other settings (e.g. M-matrices, symmetric matrices, etc.) by adjusting the relevant constraints. It is important to note that, unlike the reparametrization in \eqref{Laplacian Description using only under elements}, this formulation requires knowledge of the Laplacian’s support structure {\em{a priori}}. In practical scenarios, this information may not always be available, which can limit the direct applicability of this decomposition when estimating graphs from data.


\vspace{-0.5cm}
\subsection{The Gaussian Case - Slepian-Bangs Formula}
\label{Gaussian_subsec}
Theorems \ref{Theorem1 - Complete Graph} and \ref{Theorem2 - Sparse Graph} provide bounds applicable to any distribution of observation vectors. In this subsection, we describe the use of the \acp{crb} for Laplacian matrix estimation under the assumption of \ac{iid} Gaussian observations, where both the mean vector and covariance matrix are parametrized by the Laplacian matrix. 
The classic Slepian-Bangs formula \cite{slepian1954estimation,bangs1971array} provides a closed-form element-wise expression of the \ac{fim} in Gaussian models,  which is free of
expectation operators. 
While the element-wise  \ac{fim} for the Gaussian case is well established \cite{Kayestimation}, the matrix form is less commonly addressed. For completeness, 
we first present the matrix form of the Slepian-Bangs formula for estimating a general unknown parameter matrix, $\Thetamat \in \mathbb{R}^{M \times M}$, within a linear-Gaussian model in the following theorem, with further details available in \cite{magnus1979commutation,Magnus1980}.
\begin{theorem}[FIM for General Matrix Estimation]
    \label{Theorem 3 - Slepian-Bagns}
    Consider estimating $\Thetamat$ based on a   Gaussian observation vector, $\xvec\sim \mathcal{N}(\muvec(\Thetamat), \Cmat(\Thetamat))$, where the unknown parameter matrix $\Thetamat$ appears in both the mean vector, $\muvec(\Thetamat)\in \mathbb{R}^{M}$, and the covariance matrix, $\Cmat(\Thetamat)\in \mathbb{R}^{M \times M}$, which is a positive-definite matrix.   Then,  the corresponding $M^{2}\times M^{2}$ \ac{fim} is given by
\beqna
\label{eq: Alternative equation for the bound}
     \Jmat(\mathrm{Vec}(\Thetamat)) = \frac{\partial^T\muvec(\Thetamat)}{\partial\mathrm{Vec}(\Thetamat)}\Cmat^{-1}(\Thetamat)\frac{\partial\muvec(\Thetamat)}{\partial\mathrm{Vec}(\Thetamat)}\hspace{2cm}
        \nonumber \\ +\frac{1}{2}\frac{\partial^{T}\mathrm{Vec}(\Cmat(\Thetamat))}{\partial\mathrm{Vec}(\Thetamat)} (\Cmat^{-1}(\Thetamat)\otimes \Cmat^{-1}(\Thetamat))
        \frac{\partial\mathrm{Vec}(\Cmat(\Thetamat))}{\partial\mathrm{Vec}(\Thetamat)}. 
\eeqna
\end{theorem}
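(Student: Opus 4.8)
The plan is to derive the matrix form by starting from the classical element-wise Slepian-Bangs formula and reassembling the scalar entries into matrix blocks. Treating $\mathrm{Vec}(\Thetamat)$ as the underlying parameter vector and denoting its $k$th entry by $\theta_k$, the standard result (e.g. \cite{Kayestimation}) states that the $(i,j)$th entry of the \ac{fim} is
\[
[\Jmat]_{i,j}=\frac{\partial\muvec^T}{\partial\theta_i}\Cmat^{-1}\frac{\partial\muvec}{\partial\theta_j}+\frac{1}{2}\Tr\!\left(\Cmat^{-1}\frac{\partial\Cmat}{\partial\theta_i}\Cmat^{-1}\frac{\partial\Cmat}{\partial\theta_j}\right),
\]
where the dependence on $\Thetamat$ is suppressed for brevity. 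The mean contribution assembles immediately: the columns $\frac{\partial\muvec}{\partial\theta_j}$, collected over all $j$, are by definition the Jacobian $\frac{\partial\muvec}{\partial\mathrm{Vec}(\Thetamat)}$, so stacking these scalar entries reproduces the first term of the claimed formula verbatim.

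The core step is to recast the trace (covariance) term as a quadratic form in the columns of $\frac{\partial\mathrm{Vec}(\Cmat)}{\partial\mathrm{Vec}(\Thetamat)}$. I would set $\Pmat\define\Cmat^{-1}$ and $\Dmat_k\define\frac{\partial\Cmat}{\partial\theta_k}$, both symmetric because $\Cmat$ is symmetric, and invoke two standard vectorization identities \cite{magnus1979commutation}: $\Tr(\Amat^T\Bmat)=\mathrm{Vec}(\Amat)^T\mathrm{Vec}(\Bmat)$ and $\mathrm{Vec}(\Amat\Bmat\Cmat)=(\Cmat^T\otimes\Amat)\mathrm{Vec}(\Bmat)$. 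Writing $\Tr(\Pmat\Dmat_i\Pmat\Dmat_j)=\Tr\big((\Dmat_i\Pmat)^T(\Pmat\Dmat_j)\big)$ and applying the first identity gives $\mathrm{Vec}(\Dmat_i\Pmat)^T\mathrm{Vec}(\Pmat\Dmat_j)$; expanding $\mathrm{Vec}(\Dmat_i\Pmat)=(\Pmat\otimes\Imat)\mathrm{Vec}(\Dmat_i)$ and $\mathrm{Vec}(\Pmat\Dmat_j)=(\Imat\otimes\Pmat)\mathrm{Vec}(\Dmat_j)$, and then collapsing the middle product via the Kronecker mixed-product rule $(\Pmat\otimes\Imat)(\Imat\otimes\Pmat)=\Pmat\otimes\Pmat$, yields
\[
\Tr\!\left(\Cmat^{-1}\Dmat_i\Cmat^{-1}\Dmat_j\right)=\mathrm{Vec}(\Dmat_i)^T\big(\Cmat^{-1}\otimes\Cmat^{-1}\big)\mathrm{Vec}(\Dmat_j).
\]
Recognizing $\mathrm{Vec}(\Dmat_k)=\frac{\partial\mathrm{Vec}(\Cmat)}{\partial\theta_k}$ as the $k$th column of $\frac{\partial\mathrm{Vec}(\Cmat)}{\partial\mathrm{Vec}(\Thetamat)}$, I would then stack over all indices $i,j$ to obtain the second term of the claim.

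The main obstacle is the Kronecker bookkeeping in the display above: correctly ordering the factors in $\mathrm{Vec}(\Amat\Bmat\Cmat)=(\Cmat^T\otimes\Amat)\mathrm{Vec}(\Bmat)$, tracking on which side the identity matrix $\Imat$ sits in each expansion, and using the symmetries $\Pmat^T=\Pmat$ and $\Dmat_k^T=\Dmat_k$ to clear the transposes so that both Kronecker factors emerge as $\Cmat^{-1}$ rather than a transposed or permuted variant. Once this identity is established, the remainder is routine assembly of scalar blocks, and the positive-definiteness of $\Cmat$ guarantees that $\Cmat^{-1}$, and hence $\Cmat^{-1}\otimes\Cmat^{-1}$, is well-defined throughout.
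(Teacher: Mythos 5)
Your derivation is correct. The key identity
\begin{equation*}
\Tr\!\left(\Cmat^{-1}\Dmat_i\Cmat^{-1}\Dmat_j\right)=\mathrm{Vec}(\Dmat_i)^T\left(\Cmat^{-1}\otimes\Cmat^{-1}\right)\mathrm{Vec}(\Dmat_j)
\end{equation*}
checks out: with $\Pmat=\Cmat^{-1}$ and $\Dmat_i,\Dmat_j$ symmetric, $(\Dmat_i\Pmat)^T=\Pmat\Dmat_i$, so the trace is indeed $\mathrm{Vec}(\Dmat_i\Pmat)^T\mathrm{Vec}(\Pmat\Dmat_j)$, and the two vectorization expansions $(\Pmat\otimes\Imat)\mathrm{Vec}(\Dmat_i)$ and $(\Imat\otimes\Pmat)\mathrm{Vec}(\Dmat_j)$ combine via the mixed-product rule to give $\Pmat\otimes\Pmat$ exactly as you state; the mean term assembles trivially. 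The one point of comparison worth noting is that the paper does not prove this theorem at all — it simply cites Magnus and Neudecker (Chapter 15), where the result is obtained by applying matrix differential calculus directly to the Gaussian log-likelihood (second differential of $\log f$, then taking expectations). Your route instead takes the classical element-wise Slepian--Bangs formula as given and reassembles it into block form via vectorization and Kronecker identities. Both are standard; yours has the advantage of being self-contained modulo a result every reader will accept from Kay, while the cited derivation avoids ever writing scalar entries and generalizes more cleanly to complex or structured parametrizations. Either way, your argument is complete and would serve as a valid proof of the statement.
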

\begin{IEEEproof}
  See in \cite[Chapter~15, pp.~356--357]{magnus2019matrix}.
\end{IEEEproof}


For the special case of $\Thetamat = \Lmat$, the derivatives of the mean and covariance matrices \ac{wrt} the Laplacian parameters given by $\alphavec$ from \eqref{alpha_def} are given by
\begin{equation}
          \label{Theorem 3: Chain Rule Mean}
          \frac{\partial\muvec(\Lmat)}{\partial\alphavec} 
          = \frac{\partial\muvec(\Lmat)}{\partial\mathrm{Vec}(\Lmat)}\frac{\partial\Psimat\alphavec}{\partial\alphavec}= \frac{\partial\muvec(\Lmat)}{\partial\mathrm{Vec}(\Lmat)}\Psimat,
  \end{equation}
  where we used the linear transformation in \eqref{Laplacian Description using only under elements}.
 Similarly, the derivative of the covariance matrix \ac{wrt} $\alphavec$ is 
  \begin{equation}
      \begin{aligned}
          \label{Theorem 2: Chain Rule Covariance}
          &\frac{\partial\mathrm{Vec}(\Cmat(\Lmat))}{\partial\alphavec} = \frac{\partial\mathrm{Vec}(\Cmat(\Lmat))}{\partial\mathrm{Vec}(\Lmat)}\Psimat. 
      \end{aligned}
  \end{equation}
  By substituting \eqref{Theorem 3: Chain Rule Mean}-\eqref{Theorem 2: Chain Rule Covariance} in \eqref{eq: Alternative equation for the bound}, we obtain
  \beqna
  \label{eq: Alternative equation for the bound_G}
        \Jmat_{\alpha}(\alphavec)\!\!\!\!\!&=&\!\!\!\!\!  \frac{\partial^{T}\muvec(\Lmat)}{\partial \alphavec}\Cmat^{-1}(\Lmat)\frac{\partial\muvec(\Lmat)}{\partial\alphavec} \nonumber \\ &+&\!\!\!\!\! \frac{1}{2}\frac{\partial^{T}\mathrm{Vec}(\Cmat(\Lmat))}{\partial\alphavec}(\Cmat^{-1}(\Lmat)\otimes \Cmat^{-1}(\Lmat))
        \frac{\partial\mathrm{Vec}(\Cmat(\Lmat)}{\partial \alphavec} 
        \nonumber \\ &=&\!\!\!\!\!  \Psimat^T\Jmat_L( \Psimat\alphavec)\Psimat,
  \eeqna
where $\Jmat_L(\cdot)$ is obtained by substituting $\Thetamat=\Lmat$ in \eqref{eq: Alternative equation for the bound}.
The result in \eqref{eq: Alternative equation for the bound_G} is the Gaussian version of the \ac{fim} under transformation in \eqref{Theorem 2: Fisher Information Matrix}.
By substituting this \ac{fim} in the \acp{crb} from Theorems \ref{Theorem1 - Complete Graph} and \ref{Theorem2 - Sparse Graph}, we obtain the associated bounds for the Gaussian case.

\vspace{-0.25cm}
\subsection{Relationship with the CMLE}
\label{relation_with_cmle}
The \ac{cmle} maximizes the log-likelihood while incorporating parametric constraints on the unknown parameters to be estimated, and it has appealing asymptotic performance \cite{Moore_scoring,Nitzan_constraints}. In our case,
the parametric constraints are imposed by Properties \ref{P1}-\ref{P5} and are both equality, inequality, and sparsity constraints.
 In this work, the estimation process focuses on the parameter vector $\alphavec$, from which the Laplacian matrix $\Lmat$ is reconstructed using the relationship in \eqref{Laplacian Description using only under elements} that imposes the constraints. 
The estimator of $\alphavec$ can be formulated as
\begin{align}
\label{hat_alpha}
\hat{\alphavec} &= \arg\mspace{-28mu}\max_{\alphavecsmall \in \mathbb{R}^{\frac{M(M-1)}{2}}} \log f(\xvec; \alphavec) \nonumber \\
\text{s.t.} \quad & 
\begin{cases}
    \text{C.1}: \alphavec \leq \zerovec, \\
    \text{C.2}: \alphavec \text{ is sparse}.
\end{cases}
\end{align}
The oracle \ac{cmle}, which assumes perfect knowledge of the true support set of $\alphavec$, is given by
\begin{align}
\hat{\alphavec}^{\text{oracle}} &= \arg\mspace{-28mu}\max_{\alphavecsmall \in \mathbb{R}^{\frac{M(M-1)}{2}}} \log f(\xvec; \Lmat) \nonumber \\
\text{s.t.} \quad & 
\begin{cases}
    \text{C.1}: \alphavec \leq \zerovec, \\
    \text{C.3}: \text{supp}(\alphavec)~ \text{is given}.
\end{cases}
\end{align}
Although the oracle \ac{cmle} is not a practical estimator, its performance serves as a benchmark for evaluating the accuracy of support recovery in sparse recovery problems.

  The proposed \acp{crb} on the Laplacian matrix provides important insights into the behavior and limitations of estimators. 
  Theorem \ref{Theorem1 - Complete Graph} provides the \acp{crb} for complete graphs, where all elements of $\alphavec$ are nonzero.
  These bounds serve as a baseline and represent the estimation performance without leveraging sparsity. 
  Thus, when $\Lmat$ is sparse, the performance of both the \ac{cmle} and the oracle \ac{cmle} may be below this bound, as they inherently incorporate sparsity constraints.  Theorem \ref{Theorem2 - Sparse Graph} introduces the oracle \acp{crb} that leverage the knowledge of the true support set of $\alphavec$ and offer a more appropriate performance benchmark for sparse recovery scenarios. All bounds are local and, thus, do not account for the inequality constraints of Property \ref{P3}, while both the \ac{cmle} and the oracle \ac{cmle} satisfy these inequality constraints since the estimator $\hat{\Lmat}$ is constructed directly from $\alphavec$ using \eqref{Laplacian Description using only under elements}. Thus, in general, in low \ac{snr} regimes, the actual performance of the estimators can be lower than the bounds. The usefulness of the bounds in this context is that they can be used to identify regimes where inequality constraints need not be explicitly enforced. When the inequality constraints are satisfied naturally (without enforcement), the gap between the \ac{cmle} in \eqref{hat_alpha} and the oracle \ac{crb} from Theorem \ref{Theorem2 - Sparse Graph} can be directly attributed to sparsity constraints. This distinction enables the bounds to quantify the impact of sparsity and other constraints on estimation performance. Furthermore, the oracle \ac{crb} also helps to identify asymptotic regimes where the estimators achieve perfect support recovery and satisfy the inequality constraints.

\vspace{-0.25cm}
\section{Applications}
\label{application_section}
In this section, we explore the proposed \acp{crb} from Section \ref{bounds_sec} in several applications where the estimation of the Laplacian matrix plays a critical role. 
In Subsection \ref{power_system_subsec}, we present the \ac{crb} for admittance matrix estimation in power systems. In
Subsection \ref{GSP_model_subsection}, we discuss the new \ac{crb} for the problem of graph filter identification in diffused graph models.
Finally, in Subsection \ref{Subsection: Laplacian Estimation in GMRF}, we investigate the estimation of the Laplacian precision matrix in 
graphical models. 

\vspace{-0.25cm}
\subsection{Topology Estimation in Power System}
\label{power_system_subsec}
In this subsection, we examine the problem of mean vector estimation in a Gaussian linear model, where the Laplacian matrix influences only the expectation (and not the covariance matrix) of the observations. In particular, we discuss a key application of this problem, which is topology identification in power systems, where the network structure is inferred from observations. 
A power system can be represented as an undirected weighted graph, where buses (generators or loads) serve as the nodes, and transmission lines form the edges. In this case, the imaginary component of the admittance matrix, the susceptance matrix, serves as a Laplacian matrix that encodes the network's topology and branch parameters \cite{Grotas_Routtenberg_2019,GridGSP,halihal2024estimation}.
Accurate estimation of the admittance matrix is essential for the monitoring, analysis, and stability assessment of the power system \cite{Giannakis_Wollenberg_2013}. This estimation problem, often referred to as the inverse power flow problem, involves recovering the system's topology and line parameters from power flow measurements.

In the following, we consider the commonly-used \ac{dc} model, which is a linear approximation of the power flow equations widely-used in power system analysis and monitoring \cite{Abur_Gomez_book}.  
Accordingly, the noisy measurements of 
power injections at the $M$ buses over $N$ time samples 
can be modeled as \cite{Abur_Gomez_book,Giannakis_Wollenberg_2013}
\be
\label{DC_model}
\pvec[n]=\Lmat\btheta[n] + \etavec[n],~~~n=0,\dots,N-1,
\ee
where $n$ is the time index,
 $\pvec[n]=[p_1[n],\ldots,p_M[n]]^T$ is the active power vector at time $n$,
 and $\thetavec[n]=[\theta_1[n],\ldots,\theta_M[n]]^T$ is the voltage phase angles vector at time $n$. 
That is, $p_m[n]$ and $\theta_m[n]$ are the real power and voltage angle at the $m$th bus (node) measured at time $n$. Here, The Laplacian matrix, $\Lmat$, is the susceptance matrix (the imaginary part of the admittance matrix, denoted by $\Bmat$ in the power system literature).
The noise sequence, $\{\etavec[n]\}_{n=0}^{N-1}$, is assumed to be an \ac{iid} zero-mean Gaussian noise with a known positive definite covariance matrix, $\Rmat_{\eta}$.
 This setup enables the estimation of the network topology and the identification of the Laplacian matrix from the observation vectors $\{\pvec[n]\}_{n=0}^{N-1}$ (referred to here as $\xvec$), where $\{\thetavec[n]\}_{n=0}^{N-1}$ are assumed to be known.

By
using the vectorization operator and the Kronecker product, the model from \eqref{DC_model} can be reformulated as
\beqna
        \label{DC Model: Alternative form}
         \pvec[n] \!\!\!\!&=&\!\!\!\! 
         (\thetavec^{T}[n]\otimes \Imat_{M})\mathrm{Vec}({\Lmat}) + \etavec[n],
\eeqna
$n=0,\ldots,N-1$.
The \ac{cmle} for the estimation of the Laplacian matrix under this model has been developed in \cite{halihal2024estimation}.

In order to find the single-measurement \ac{fim}, we only need to calculate partial derivatives of the mean and covariance of $\pvec[n]\sim ((\thetavec^{T}[n]\otimes \Imat_{M})\mathrm{Vec}({\Lmat}),\Rmat_{\eta})$. It can be seen that
    \be
            \label{DC Model: Derivative of the Mean}
            \frac{\partial(\thetavec^{T}[n]\otimes \Imat_{M})\mathrm{Vec}({\Lmat})}{\partial\mathrm{Vec}(\Lmat)} = (\thetavec^{T}[n]\otimes \Imat_{M})
            \ee
            and
            \be
            \label{DC Model: Derivative of the Covariance}
\frac{\partial\mathrm{Vec}(\Rmat_{\eta})}{\partial\mathrm{Vec}(\Lmat)} = \Zeromat,
    \ee
since, in this case, the Laplacian matrix only parametrizes the mean vector. By substituting \eqref{DC Model: Derivative of the Mean} and  \eqref{DC Model: Derivative of the Covariance} in \eqref{eq: Alternative equation for the bound_G}, we obtain the following  expression for the single-measurement \ac{fim}:
\begin{equation}
\label{fim_dc}
        \Jmat_{n}(\alphavec)=\Psimat^{T}(\thetavec[n]\otimes\Imat_{M})\Rmat^{-1}_{\eta}(\thetavec^{T}[n]\otimes \Imat_{M})\Psimat.
\end{equation}

It is well known that 
the total \ac{fim} for $N$ independent samples is the sum of the per-sample \acp{fim}, and is given by (see \cite[Chapter~3, pp.~33--35]{Kayestimation})
\be
\label{total_fim}
\Jmat
(\alphavec
)=\sum_{n=0}^{N-1}\Jmat_{n}(\alphavec).
\ee 
By substituting \eqref{fim_dc} in
 \eqref{total_fim}, the $N$-samples \ac{fim} is given by
\be
\label{total_fim2}
\Jmat
(\alphavec)=\Psimat^{T}\sum_{n=0}^{N-1}
(\thetavec[n]\otimes\Imat_{M})\Rmat^{-1}_{\eta}(\thetavec^{T}[n]\otimes \Imat_{M})\Psimat.
\ee


In practical scenarios, measurements in power systems might be incomplete due to sensor failures or communication issues. To account for this, we modify the model in \eqref{DC_model} by introducing diagonal zero-one selection matrices $\Smat[n]\in{\mathbb{R}}^{M\times M}$ that indicate the available measurements at time $n$:
 \begin{equation} \label{missing_measurements_model} 
\Smat [n]\pvec[n] = \Smat[n] \Lmat \thetavec[n] + \Smat[n] \etavec[n], \quad n = 0, \dots, N-1.\end{equation} 
Under this model, the noise term, $\Smat[n] \etavec[n]$, has zero mean and a covariance matrix given by $\Smat[n]\Rmat_{\eta} \Smat^{T}[n]$.
Similar to the derivation of  \eqref{total_fim2}, the $N$-samples \ac{fim}  under a missing measurements model \eqref{missing_measurements_model}  
is given by
\beqna\label{modified_FIM}
 \Jmat_{\text{missing}}(\alphavec) = \Psimat^{T} \sum_{n=0}^{N-1}\!\!\!\!\!\!\!&(&\!\!\!\!\!\!\!\thetavec[n]\otimes \Smat[n]) 
 (\Smat[n]\Rmat_{\eta} \Smat^{T}[n])^{-1} \nonumber \\ &\times&\!\!\!\!\!\! (\thetavec^{T}[n]\otimes \Smat[n]) \Psimat. 
 \eeqna
 By substituting this \ac{fim} in the \acp{crb} from Theorems \ref{Theorem1 - Complete Graph} and \ref{Theorem2 - Sparse Graph}, we obtain the associated bounds for the problem of topology identification in power systems with missing measurements.

The derived \ac{fim}, $\Jmat_{\text{missing}}(\alphavec)$, enables the computation of \ac{crb}s for various scenarios with incomplete data, and can provide critical insights into the identifiability of power system topologies. 
The identifiability of the system is defined as the ability to uniquely determine the power system topology represented by $\Lmat$, given the available data, represented here by $\{\Smat [n]\pvec[n]\}_{n=0}^{N-1}$. 
A sufficient condition for 
 identifiability under incomplete observations is that the matrix $ \Jmat_{\text{missing}}(\alphavec) $ will be a non-singular matrix. The sparsity of $\Lmat$ plays a crucial role in this context.  
These bounds can also guide the design of sensor placement strategies to ensure identifiability and optimize the network monitoring capabilities \cite{dabush2025efficientsamplingallocationstrategies}.

\subsection{Graph Filter Identification in Diffused Model}
\label{GSP_model_subsection}
In this subsection, we consider the problem of identifying graph filters within a diffused model, where the input signal $\zvec$ has a zero-mean Gaussian distribution with a known covariance matrix, $\Cmat_{z}$.
This setup describes the inference of non-stationary graph signals diffused over an unknown network structure, as discussed, e.g., in Section II in \cite{Shafipour2021Diffusion}, for a general distribution and general \ac{gso}. 
 We focus on the case where the \ac{gso} is a Laplacian matrix, $\Lmat$.

Specifically, suppose the output graph signal $\xvec$ is generated by  
\begin{equation}
        \label{Graph Filter Identificatin: Diffusion Generic Equation} 
         \xvec = \Hmat\zvec = \sum_{f=0}^{F-1}h_{f}\Lmat^{f}\zvec,
\end{equation}
where  $\Hmat\in{\mathbb{R}}^{M\times M}$ is a graph filter represented as a polynomial in $\Lmat$, and $\zvec\sim\mathcal{N}(\zerovec,\Cmat_{z})$. We assume for simplicity that the vector of coefficients $\hvec=[h_0,h_1,\ldots,h_{F-1}]^T$ is known. The goal is to recover the sparse Laplacian, which encodes direct
relationships between the elements of $\xvec$ from observable indirect
relationships generated by a diffusion process.
The covariance matrix of the output signal in the general non-stationary setting is given by
\beqna
\label{Diffusion Model: Covariance matrix of output}
\Cmat_{x} = \Hmat\Cmat_{z}\Hmat.
\eeqna
The non-stationary case of $\Cmat_{z} \neq \Imat_{M}$ implies that the covariance $\Cmat_{x}$ does not necessarily share eigenvectors with $\Lmat$. Nevertheless, the eigenvectors of $\Lmat$ are preserved in the graph filter $\Hmat$, as $\Hmat$ is a polynomial of $\Lmat$. Thus, estimating $\Hmat$ allows the estimation of $\Lmat$.

To quantify how accurately the Laplacian can be inferred from the available observations, we derive the
\ac{fim}.  
To derive the \ac{fim}, we need the partial derivatives of both the mean and covariance matrix of $\xvec$ \ac{wrt} $\mathrm{Vec}_{\ell}(\Lmat)$. Since $\xvec$ has a zero-mean distribution, its derivative \ac{wrt} $\mathrm{Vec}_{\ell}(\Lmat)$ is zero.
With regard to the derivative of the covariance matrix from \eqref{Diffusion Model: Covariance matrix of output}, we obtain by using matrix calculus results (see, e.g.\cite[Chapter~9, pp.~205--207]{magnus2019matrix})  and  $\Hmat = \sum_{f=0}^{F-1}h_{f}\Lmat^{f}$ that
\beqna
\label{Diffusion Model: Derivative of the Covariance Part 1}
   \frac{\partial \mathrm{Vec}(\Cmat(\Lmat))}{\partial\mathrm{Vec}_{\ell}(\Lmat)} = \sum_{f=0}^{F-1}\sum_{j=1}^{f}h_{f} \hspace{4cm} \nonumber \\ \times  \bigg( \!\! \Big(\Hmat\Cmat_{z}\Lmat^{f-j}\otimes\Lmat^{j-1} \Big) + \Big(\Lmat^{f-j}\otimes \Hmat\Cmat_{z}\Lmat^{j-1}\Big) \!\! \bigg)\Psimat. 
\eeqna
Thus, the resulting \ac{fim} for the parameter $\alphavec$ is
 \beqna
\label{Diffusion Model: The FIM for the lower}
\Jmat(\alpha) =  \Psimat^{T} 
\sum_{f_{1}=0}^{F-1}\sum_{f_{2}=0}^{F-1}\sum_{i=1}^{f_{1}}\sum_{j=1}^{f_{2}}h_{f_{1}}h_{f_{2}} \hspace{2.5cm}\nonumber \\ \times\bigg((\Lmat^{f_{1}-i}\Cmat_{z}\Hmat\otimes \Lmat^{i-1}) + (\Lmat^{f_{1}-i}\otimes \Lmat^{i-1}\Cmat_{z}\Hmat)\bigg) \hspace{0.3cm}\nonumber \\ \times
\bigg((\Hmat\Cmat_{z}\Hmat)^{-1}\otimes (\Hmat\Cmat_{z}\Hmat)^{-1}\bigg) \hspace{3.1cm}\nonumber \\ \times
\bigg(\hspace{-0.1cm}(\Hmat\Cmat_{z}\Lmat^{f_{2}-j}\otimes \Lmat^{j-1}) \hspace{-0.05cm}+ \hspace{-0.05cm}(\Lmat^{f_{2}-j}\hspace{-0.05cm}\otimes\Lmat^{j-1}\Cmat_{z}\Hmat)\hspace{-0.1cm}\bigg)\Psimat. 
\eeqna

\subsubsection{Identifiability through Stationarity}
\label{Identifiability through Stationarity}
When the input covariance satisfies $\Cmat_{z} = \Imat_{M}$, the \ac{fim} in \eqref{Diffusion Model: The FIM for the lower} is reduced  to
 \beqna
\label{Diffusion Model: The FIM for stationary case step 1}
\!\!\!\!\!\!&\Jmat&\!\!\!\!\!\!(\alpha) =  \Psimat^{T} 
\sum_{f_{1}=0}^{F-1}\sum_{f_{2}=0}^{F-1}\sum_{i=1}^{f_{1}}\sum_{j=1}^{f_{2}}h_{f_{1}}h_{f_{2}} \bigg(\Lmat^{f_{1}-i}\otimes \Lmat^{i-1}\bigg) \nonumber \\ &\times&\!\!\!\!\!\!
\bigg(\Imat_{M}\otimes \Hmat^{-1} + \Hmat^{-1}\otimes \Imat_{M}\bigg)^{2}   
\bigg(\Lmat^{f_{2}-j}\otimes \Lmat^{j-1}\bigg)\Psimat,
\eeqna
where we use the fact that, since $\Hmat$ is a polynomial in $\Lmat$, we have $\Lmat^{f}\Hmat = \Hmat\Lmat^{f}$, $f=1,\ldots,F$.
Moreover, by using the spectral decomposition $\Lmat = \Vmat\Lambdamat\Vmat^{T}$, we get the following:
\begin{equation}
\begin{aligned}
\label{Diffusion Model: The FIM for stationary case step 2}
&\Jmat(\alpha) =  \Psimat^{T} (\Vmat\otimes \Vmat)
\sum_{f_{1}=0}^{F-1}\sum_{f_{2}=0}^{F-1}\sum_{i=1}^{f_{1}}\sum_{j=1}^{f_{2}}h_{f_{1}}h_{f_{2}} \\&\quad \bigg(\Lambdamat^{f_{1}-i}\otimes \Lambdamat^{i-1}\bigg)
\bigg(\Imat_{M}\otimes \Lambdamat_{H}^{-1} + \Lambdamat_{H}^{-1}\otimes \Imat_{M}\bigg)^{2} \\&\quad 
\bigg(\Lambdamat^{f_{2}-j}\otimes \Lambdamat^{j-1}\bigg)(\Vmat\otimes \Vmat)^{T}\Psimat, 
\end{aligned}
\end{equation}
 where $\Lambdamat_{h} \define \sum_{f=0}^{F-1}h_{f}\Lambdamat^{f}$.

From the result in \eqref{Diffusion Model: The FIM for stationary case step 2}, we can see that for $\Cmat_{z} = \Imat_{M}$, the \ac{fim} is diagonalizable by the Laplacian eigenvector matrix, $\Vmat$. 
This reflects the fact that
 the estimation problem in this case is simplified to the estimation of eigenvalues from data, as $\Vmat$ can be obtained from the output covariance eigenvectors.
This diagonalized form of the \ac{fim} provides insights into the robustness of recovering eigenvectors and eigenvalues. If the eigenvectors are slightly perturbed due to noise or model mismatch, their impact on eigenvalue estimation can be analyzed through the \ac{fim}. When the eigenvalues are closely spaced or degenerate, the \ac{fim} becomes nearly singular, making estimation difficult. In contrast, well-separated eigenvalues improve the conditioning of the \ac{fim}, leading to the more stable and accurate inference of the graph topology.

\subsubsection{Input Covariance Rank}
\label{Role of Input Covariance Rank}
If the covariance matrix $\Cmat_{z}$ is not full-rank, certain eigenvalues of $\Lmat$ become unidentifiable, as the observed covariance, $\Cmat_{x}$, does not certain sufficient information to estimate them. Additionally, in the non-stationary case where $\Cmat_{x}$ and $\Lmat$ are not jointly diagonalizable, the ability to recover the eigenvectors of $\Lmat$ may also be compromised. This effect is manifested in the structure of the \ac{fim} in \eqref{Diffusion Model: The FIM for the lower}, where a low-rank $\Cmat_{z}$ reduces the effective rank of $\Cmat_{x}$. Thus, the rank deficiency of the \ac{fim}, resulting from the low-rank structure of $\Cmat_{x}$, limits the identifiability of the graph topology.

\subsubsection{Influence of Filter Order $F$}
\label{Influence of Filter Order}
Higher-order filters encode multi-hop interactions that capture more global structural information in the diffused observations. As shown by the \ac{fim} expression in \eqref{Diffusion Model: The FIM for the lower}, increasing the filter order $F$ generally improves the model identifiability by increasing the rank and conditioning of the information matrix. The summation in \eqref{Diffusion Model: The FIM for the lower} consists of multiple \ac{psd} matrices, and the rank of a sum of \ac{psd} matrices can only increase (see e.g. Observation $7.1.3$ in \cite{Horn2012}). However, by the Cayley-Hamilton theorem, any polynomial function of the Laplacian beyond degree $M-1$  becomes redundant,  limiting the benefit of increasing $F$ beyond this threshold. Consequently, while a larger $F$ initially improves identifiability and estimation accuracy, no further gain is achieved for $F \geq M$. 
In summary, higher-order diffusion improves estimation performance, 
but only up to a filter order of $F = M-1$.

\subsection{Laplacian Estimation in GMRF}
\label{Subsection: Laplacian Estimation in GMRF}
 \acp{gmrf} provide a widely used statistical framework for learning sparse dependency structures in multivariate data.
The \ac{lgmrf} is based on adding Laplacian structural constraints, which makes this framework particularly relevant in applications where the underlying signals exhibit smooth variations over a network, such as those arising in power systems, social networks, and biological data \cite{shuman2013emerging,Egilmez_Pavez_Ortega_2017,Lital2023Smooth}.
Under the \ac{lgmrf} model, the problem of graph learning can be formulated as a \ac{mle} of the precision matrix under the assumption that it satisfies Laplacian structural constraints.
Additionally, a sparsity-promoting penalty can be incorporated to encourage parsimony in the inferred graph structure.

In this case, we consider a set of independent data samples, $\{\xvec[n]\}_{n=0}^{N-1}$, drawn from a zero-mean Gaussian distribution parametrized by a precision matrix $\Lmat$, such that 
\be 
\label{lgmrf_eq} \xvec[n] \sim \mathcal{N}(\zerovec, \Lmat^\dagger),
\ee
where $\Lmat$ is a Laplacian matrix satisfying Properties \ref{P1}-\ref{P5}.

 Given the observation vector $\xvec=[\xvec^T[0],\ldots\xvec^T[N-1]]^T $ drawn from an \ac{lgmrf}, its log-likelihood is given by 
\beqna
\label{GMRF_log_likelihood} \log f(\xvec;\Lmat)=  {\text{const}} -  {\text{Tr}}(\Lmat\Smat) -\log(|\Lmat|_+),
\eeqna
where $\Smat\define \frac{1}{N}\sum_{n=0}^{N-1}\xvec[n]\xvec^{T}[n]$ is the empirical covariance matrix, ${\text{const}}$ is an additive constant term independent of $\Lmat$, and $|\Lmat|_{+}$ denotes the pseudo-determinant of $\Lmat$ (product of its nonzero eigenvalues).   The objective function in \eqref{GMRF_log_likelihood} can be regularized to promote sparsity in the precision matrix by adding a penalty term, as in \cite{Egilmez_Pavez_Ortega_2017,Yakov2023Laplace}.

The objective function in \eqref{GMRF_log_likelihood} involves the pseudo-determinant term, since the matrix $\Lmat$ is a singular matrix. In particular, we assume a connected graph, and thus,  the Laplacian matrix $\Lmat$  has a rank of $M-1$. The smallest eigenvalue of $\Lmat$ is zero, and it is associated with the eigenvector $\frac{1}{\sqrt{M}}\onevec$. Thus,  in order to avoid the use of the singular matrix $\Lmat$, we use the relation \cite{Yakov2023Laplace,PalomarApproximation}:
\be
\label{relation_L}
|\Lmat|_+= |\Lmat+\Dmat|,
\ee 
where  $\Dmat=\frac{1}{M} \onevec \onevec^T$. By substituting \eqref{relation_L} in \eqref{GMRF_log_likelihood}, the log-likelihood can be written as
\begin{equation}
    \begin{aligned}
        \label{Alternative form}
        \log f(\xvec;\Lmat)=  {\text{const}} -\frac{N}{2}\text{Tr}(\Lmat \Smat )  -\frac{N}{2}\log(|\Lmat+\Dmat|).
    \end{aligned}
\end{equation}

The gradient of the log-likelihood \eqref{Alternative form} \ac{wrt} $\Lmat$ is given by (see, e.g.\cite[Chapter~8, pp.~167--169]{magnus2019matrix})
\beqna
        \label{grad2}
        \frac{\partial \log f(\xvec;\Lmat)}{\partial \Lmat}=   -\frac{N}{2}\Smat   -\frac{N}{2}(\Lmat+\Dmat)^{-1}
       .
\eeqna
The \ac{fim} from \eqref{J_def} can also be computed via
\beqna
\label{J_def2}
\Jmat_L\big(\mathrm{Vec}(\Lmat)\big)
\define -  {\mathbb{E}}\left[\frac{\partial }{\partial \mathrm{Vec}(\Lmat)}\left(\frac{\partial \log f(\xvec;\Lmat)}{\partial 
 \mathrm{Vec}(\Lmat)}\right)^T\right].
\eeqna
Based on \eqref{grad2}, it can be seen that the Hessian of the log-likelihood does not depend on the data $\xvec$, as in many Gaussian settings, implying that the expectation is trivial.  
By substituting \eqref{grad2} in \eqref{J_def2}, we obtain that the \ac{fim} in this case is
\beqna
\label{JL_GMRF}
\Jmat_L\big(\mathrm{Vec}(\Lmat)\big) = 
\frac{N}{2} \left((\Lmat + \Dmat)^{-1} \otimes (\Lmat + \Dmat)^{-1}\right).
\eeqna 
Equivalently, according to \eqref{Theorem 2: Fisher Information Matrix}, the \ac{fim} for $\alphavec$ is 
\begin{equation}
    \begin{aligned}
        \label{GMRF: The Fisher Information Matrix}
        &\Jmat(\alphavec) = \frac{N}{2}\Psimat^{T}
        \bigg((\Lmat+\Dmat)^{-1}\otimes(\Lmat+\Dmat)^{-1}\bigg)\Psimat.
    \end{aligned}
\end{equation}

{\em{Discussion:}} First, we note that since the rank of a Kronecker product $\Amat  \otimes \Bmat$ is ${\text{rank}}(\Amat){\text{rank}}(\Bmat)$, then the rank of $\Jmat_L$ from \eqref{JL_GMRF} is $M^2$. Thus, the full \ac{fim} $\Jmat_L$ is a full-rank matrix in this case. Thus, in this case, all the bounds from Theorems \ref{Theorem1 - Complete Graph} and \ref{Theorem2 - Sparse Graph}
 can be computed.  
Another insight is that since the precision matrix $\Lmat$ is the inverse of the covariance matrix of $\xvec$, we observe that stronger graph connectivity (i.e. larger $\Lmat$ entries) leads to smaller covariance values, improving estimation accuracy. Intuitively, a highly connected graph leads to less variability in observations, tightening the CRB. Thus, If a node has weak connectivity (i.e. small Laplacian entries in its row/column), then its impact on the inverse covariance $\Cmat_x$ is small. Consequently, weakly connected nodes are harder to estimate because they contribute less to the overall likelihood. 

A special case has arisen for block-diagonal patterns. 
If the graph consists of $K$ disconnected components, the Laplacian matrix $\Lmat$ can be written in block-diagonal form:
\begin{equation}
    \Lmat =
    \begin{bmatrix}
    \Lmat_1 & 0 & \cdots & 0 \\
    0 & \Lmat_2 & \cdots & 0 \\
    \vdots & \vdots & \ddots & \vdots \\
    0 & 0 & \cdots & \Lmat_K
    \end{bmatrix},
\end{equation}
where each $\Lmat_k \in \mathbb{R}^{M_k \times M_k}$ represents the Laplacian of the $k$-th connected component, with $M_k$ denoting the number of nodes in that component. Since each $\Lmat_k$ has a single zero eigenvalue, the overall Laplacian $\Lmat$ has rank $M - K$ instead of $M-1$.
To ensure full-rank correction, we define the following matrix:
\begin{equation}
    \bar{\Dmat} = \sum_{k=1}^{K} \frac{1}{M_k} \onevec_k \onevec_k^T,
\end{equation}
where $\onevec_{k} = \sum_{i\in \mathcal{V}_{k}}\evec_{i}\in \{0,1\}^{M}$ is a vector of ones associated with the nodes in the $k$-th connected component, i.e. $\mathcal{V}_{k}$. This generalizes the standard rank-one correction $\Dmat$ used for fully connected graphs.

Using this correction, in a similar way to the derivation of \eqref{JL_GMRF}, the \ac{fim} can be written in this case as
\begin{equation}
    \Jmat_L\big(\mathrm{Vec}(\Lmat)\big) = \frac{N}{2} 
    \bigg( (\Lmat+\bar{\Dmat})^{-1} \otimes (\Lmat+\bar{\Dmat})^{-1} \bigg) .
\end{equation}
Since $\Lmat+\bar{\Dmat}$ remains a block-diagonal matrix, its inverse also follows a block structure:
\beqna
    (\Lmat+\bar{\Dmat})^{-1} =\hspace{5.5cm}
    \nonumber\\
    \begin{bmatrix}
    (\Lmat_1 + \Dmat_1)^{-1} & 0 & \cdots & 0 \\
    0 & (\Lmat_2 + \Dmat_2)^{-1} & \cdots & 0 \\
    \vdots & \vdots & \ddots & \vdots \\
    0 & 0 & \cdots & (\Lmat_K + \Dmat_K)^{-1}
    \end{bmatrix}.
\eeqna
Taking the Kronecker product, the full FIM $\Jmat_L\big(\mathrm{Vec}(\Lmat)\big)$ can be expressed as a block diagonal matrix 
where each block corresponds to a separate subgraph, given by
\begin{equation}
    \Jmat_k(\mathrm{Vec}(\Lmat))= \frac{N}{2}
    \bigg( (\Lmat_k+\Dmat_k)^{-1} \otimes (\Lmat_k+\Dmat_k)^{-1} \bigg),
\end{equation}
$k=1,\ldots,K$.
This block structure of the \ac{fim} implies that each connected component can be estimated independently without interference from other subgraphs.
    That is, no information is shared between different subgraphs, meaning that poor connectivity in one component does not affect others.


 \section{Simulations}
 \label{simulations_section}
In this section, we evaluate the performance of the proposed \acp{crb} for Laplacian matrix estimation in two key applications: power system topology estimation (Subsection \ref{simulations_power_subsection}) and graphical models under the \ac{lgmrf} framework (Subsection \ref{simulation_GMRF_sec}).  We compare the performance of different estimation methods with the \acp{crb} developed in Section \ref{bounds_sec}.
We focus on the \acp{crb} from Theorem \ref{Theorem2 - Sparse Graph}, as the considered applications involve sparse graphs. The bounds from Theorem \ref{Theorem1 - Complete Graph} serve as an intermediate step in deriving the sparse bounds, and can also be used as upper bounds on the \ac{mse} of sparse estimators.

\subsection{Topology Estimation in Power Systems}
\label{simulations_power_subsection}
Accurate estimation of the admittance matrix is essential for monitoring, analyzing, and securing power networks. 
In this subsection, we evaluate the performance of the  \acp{crb} 
 for the \ac{dc} model described in Subsection \ref{power_system_subsec}, and compare the results with the \ac{mse} obtained by 
 the \ac{cmle} and the oracle \ac{cmle}, as developed in \cite{halihal2024estimation}.

The simulations are conducted on the IEEE 33-bus system, with parameters from \cite{iEEEdata}.
In Figure \ref{fig: Admittance matrix illustration} we present the associated susceptance matrix, $\Bmat$, used in our simulation.
All results are averaged over $200$ Monte-Carlo simulations, using a range of $35$ different \ac{snr} levels.
\begin{figure}[hbt]
    \centering
\includegraphics[width=0.85\linewidth]{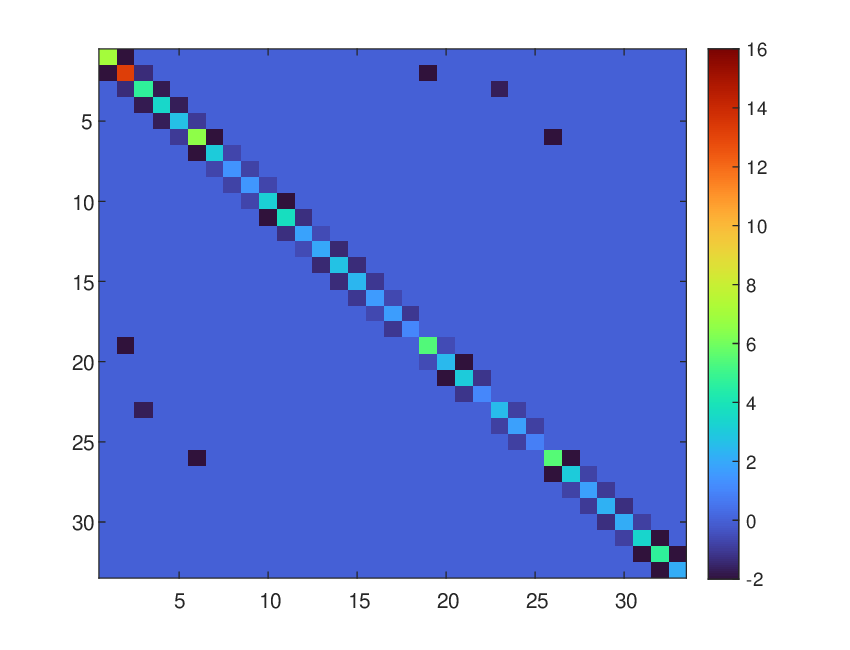}
    \caption{The susceptance matrix, $\Lmat$, for IEEE 33-bus system.}
    \label{fig: Admittance matrix illustration}
\end{figure}

For the \ac{cmle} and the oracle \ac{cmle}, we applied the \ac{admm} framework with a regularization parameter $\lambda = 5$ and a penalty parameter of $\rho = 0.001$. These parameters were fine-tuned experimentally.
Figure \ref{fig: Power System Figure 1} presents the \ac{mse} of the estimators of ${\Lmat}$, along with the proposed oracle \ac{crb}, as a function of the \ac{snr} for $N=600$, where $N$ is the number of available measurements. The \ac{snr} is defined as 
\begin{equation}
\text{SNR} = 10\log\left(\frac{1}{MN \sigma^{2}} \sum_{n=0}^{N-1} \left\| \pvec[n] \right\|_{2}^2 \right).
\end{equation}

As shown in the figure, the oracle \ac{crb} serves as a valid bound for the oracle \ac{cmle}. In addition, it can be seen that both the \ac{cmle} and the oracle \ac{cmle} asymptotically achieve the oracle \ac{crb}. 
Moreover, increasing the number of measurements improves support identification, bringing the \ac{cmle} even closer to the oracle \ac{crb}, particularly at higher \ac{snr} values. For larger numbers of measurements, the \ac{cmle} and oracle \ac{cmle} more closely coincide. This suggests that a higher number of observations leads to better support identification, reducing the gap between the estimators and the bound. 
Finally, it can be seen that the bound $\Bmat_{2}$ is consistently tighter than $\Bmat_{1}$  across all \ac{snr} levels. This validates the theoretical ordering of the bounds in Subsection \ref{order_relation_subsec}.

\begin{figure}[hbt]
    \centering
\includegraphics[width=0.8\linewidth]{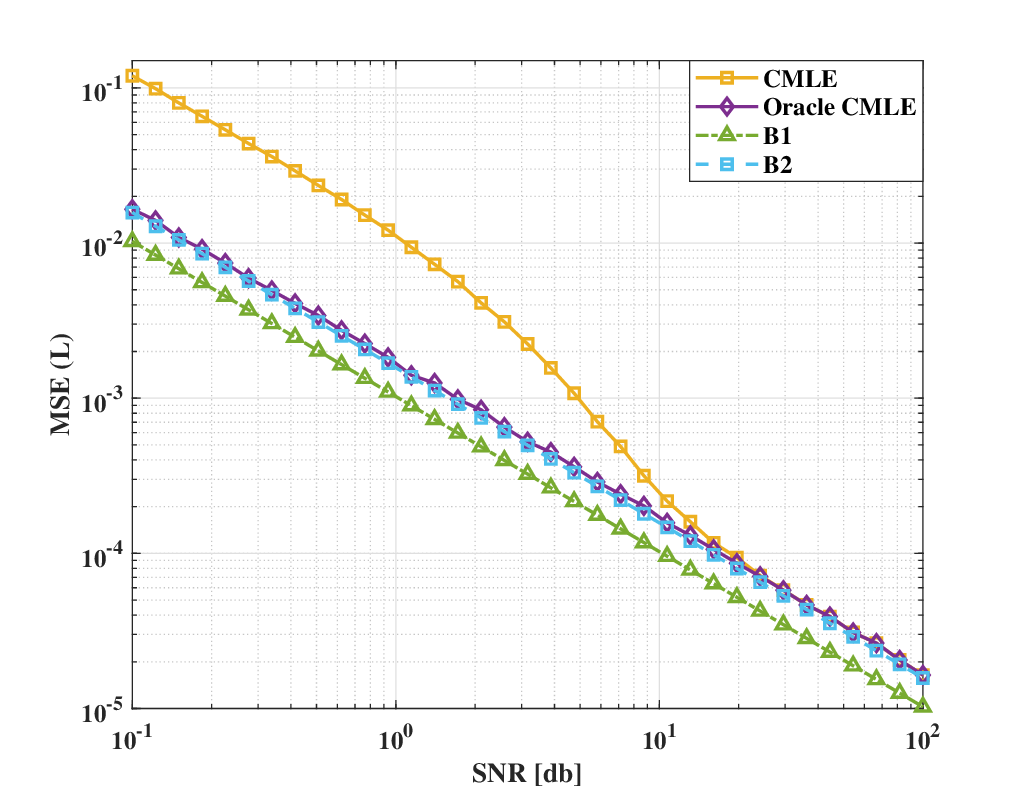}
    \caption{The \ac{mse} of the different estimators compared with the oracle \ac{crb} for $\Lmat$, for IEEE $33$-bus system with $N=600$.}
    \label{fig: Power System Figure 1}
\end{figure}


\subsection{Laplacian Estimation in GMRF}
\label{simulation_GMRF_sec}
For the \ac{lgmrf} model, the evaluation of the estimators and bounds is conducted using the random planar graph model as the underlying topology.
In particular, 
we performed experiments on a random planar graph with $100$ nodes (see Fig. \ref{fig: Laplacian Matrix GMRF 100 Nodes} for an illustration), following the methodology outlined in \cite{Yakov2023Laplace}. The graph's adjacency matrix was constructed with edge weights sampled uniformly from $[0.5,2]$. The corresponding Laplacian matrix was then used to generate zero-mean Gaussian data with pseudo-inverse precision matrix $\Lmat^{\dagger}$, as described in \eqref{lgmrf_eq}.

The simulations were averaged over $150$ Monte Carlo realization, 
and the sample size ratio $\frac{n}{p}$ was used as a key parameter. 
The performance is evaluated using the \ac{re} as a metric, which measures the Frobenius norm difference between the estimated and true Laplacian. 
\begin{equation}
    \begin{aligned}
        \label{Relative Error}
        \mathrm{RE} = \frac{\|\hat{\alphavec}-\alphavec_{\text{true}\|_{2}}}{\|\alphavec_{\text{true}}\|_{2}}.
    \end{aligned}
\end{equation}
We compare the performance of the following algorithms: 
\renewcommand{\labelenumi}{\arabic{enumi}.}
\begin{enumerate}
    \item \textbf{\Ac{pgd}} Approach: Implemented by solving Equation 9 in \cite{Yakov2023Laplace}, using a regularization parameter $\lambda = 0.2$. 
    \item \textbf{\ac{newgle}} Approach \cite{Yakov2023Laplace}: A proximal Newton approach for \ac{lgmrf} estimation, incorporating an MCP penalty and an inner projected nonlinear conjugate solver. 
    \item \textbf{\ac{alpe}} Approach \cite{ying2021minimax}: A wieghted $\ell_{1}$-norm regularized \ac{mle} approach where the weights are computed adaptively.  
\end{enumerate}
We compare these methods against the \acp{crb} of Theorem \ref{Theorem2 - Sparse Graph}.

Figure \ref{fig: Logarithmic Scale} presents the relative error across different sample size ratios $(\frac{n}{p})$. As expected, the relative error consistently decreases with an increasing number of observations ($n$) relative to the number of nodes ($p$), underscoring the importance of sufficient data for accurate estimation. Among the tested algorithms (\ac{pgd}, \ac{newgle}, \ac{alpe}), the \ac{pgd} algorithm consistently achieves the lowest relative error. Furthermore, comparing these practical algorithms to the theoretical oracle \acp{crb}, $\Bmat_{1}$ and $\Bmat_{2}$, reveals that the gap between actual algorithm performance and the theoretical lower bounds narrows significantly with larger sample sizes. 
Overall, these results illustrate the effectiveness of \ac{pgd} in Laplacian estimation and highlight the utility of the proposed \acp{crb} as benchmarks for evaluating and developing future estimation methodologies. Furthermore, it can be seen that the bound $\Bmat_{2}$ is consistently tighter than $\Bmat_{1}$ across the entire range of sample size ratios. 
For $\frac{n}{p}>35$, all estimators closely approach the oracle bounds, indicating the support is successfully recovered in these regimes and that the estimators achieve near-optimal performance.

\begin{figure}[hbt]
    \centering
\includegraphics[width=0.8\linewidth]{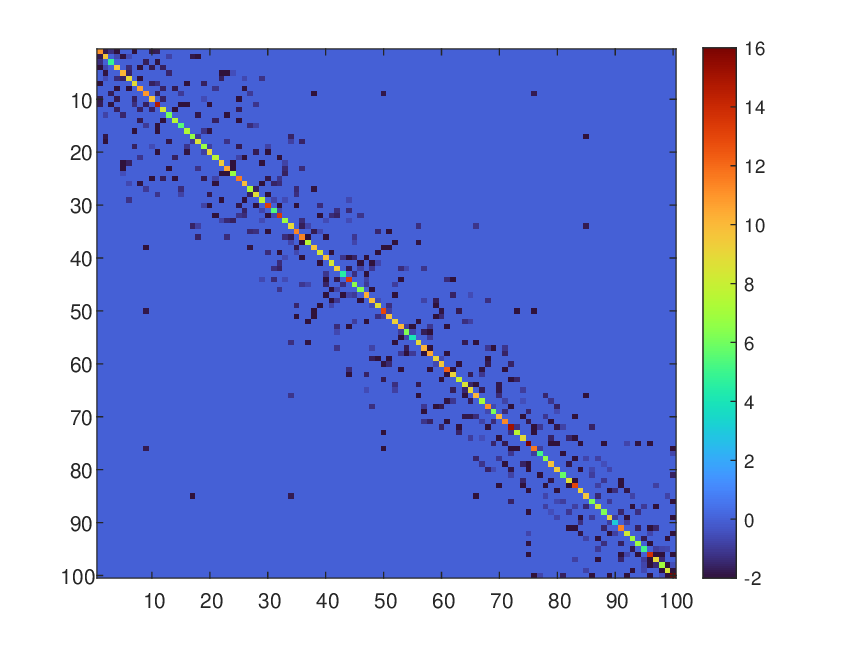}
    \caption{Random planar graph with $100$ nodes used in the simulations of Subsection \ref{simulation_GMRF_sec}.}
    \label{fig: Laplacian Matrix GMRF 100 Nodes}
\end{figure}

\begin{figure}[hbt]
    \centering
\includegraphics[width=0.8\linewidth]{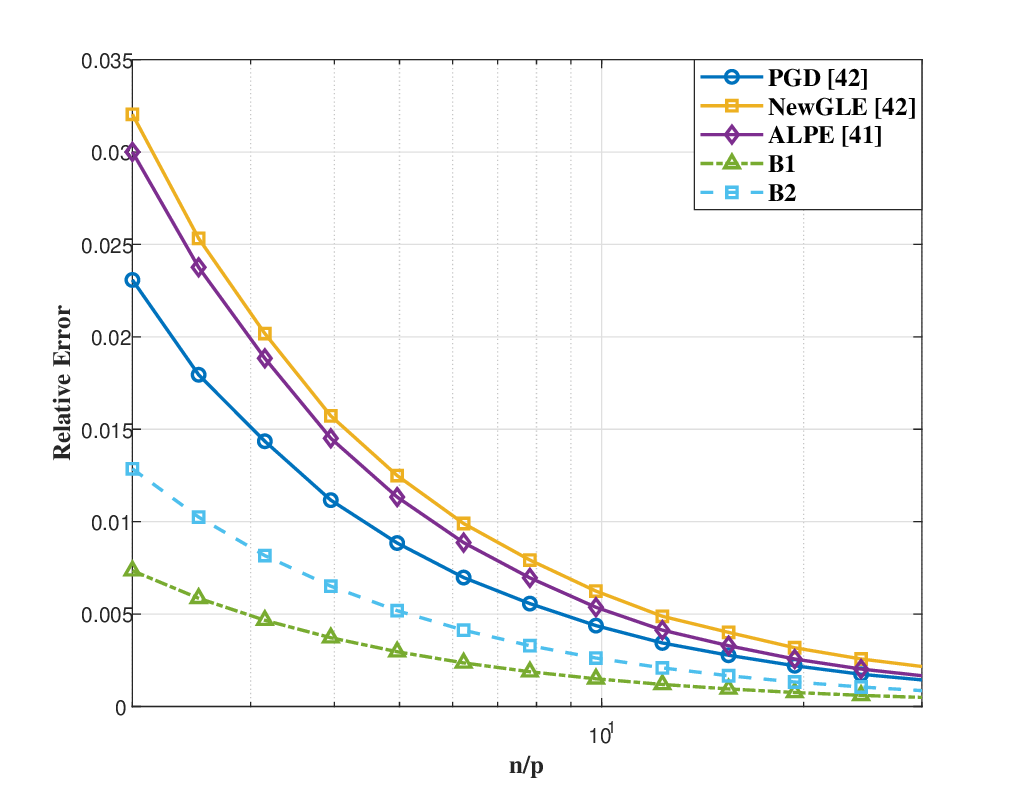}
    \caption{The \ac{mse} of the different estimators compared with the oracle \acp{crb} for planner graph versus $n/p$.}
    \label{fig: Logarithmic Scale} 
\end{figure}


\section{Conclusion}
\label{sec:conclusions}
In this paper, we have investigated the problem of developing performance bounds for the estimation of Laplacian matrices under structural and sparsity constraints, which are fundamental in signal processing problems involving networks. We have demonstrated that the estimation problem under Laplacian structural constraints can be represented via a linear reparameterization that enforces symmetry and the null-space condition. By leveraging a linear reparametrization that inherently enforces the symmetry and null-space properties of the Laplacian matrix, we derived a reduced-dimensional FIM and formulated closed-formed \acp{crb} for both fully connected and sparse graphs.
For sparse Laplacian matrices, we introduced two oracle \acp{crb} that incorporate knowledge of the true support set.
 We analyzed the order relations between the bounds and provided an associated Slepian-Bangs formula for the Gaussian case.
 The oracle \acp{crb} and the order relation between them are applicable to general sparse recovery problems, making the discussion valuable for a broad class of statistical inference tasks.
We applied the \acp{crb} to three representative applications:  (i) power system topology estimation via admittance matrix recovery, (ii) graph filter identification in GSP, and (iii) precision matrix estimation in GMRFs under Laplacian constraints.
 Our simulation results demonstrated that the \ac{cmle} asymptotically achieves the oracle \ac{crb}.
 Additionally, the CRBs accurately characterize estimators' performance across different noise and sample-size regimes, and help to identify the regions where the inequality constraints are naturally satisfied. 
The results show that the proposed bound can be used as a benchmark for the estimation performance of the
CMLE and oracle CMLE, and aids in investigating the influence of the Laplacian constraints on the estimation performance.



Future work can extend the results in several directions, including the extension to complex-valued Laplacian matrices, which naturally arise in power systems and communication networks \cite{halihal2024estimation}. 
Additionally, an interesting research avenue is the design of sampling strategies based on the \ac{crb} \cite{Lital2023Smooth}.  Finally, an important direction is the development of blind estimation techniques for scenarios where some network measurements are missing or incomplete.


\bibliographystyle{IEEEtran}


\end{document}